\newif\ifincludeappendixx
\definecolor{my-full-blue}{HTML}{1F77B4}
\definecolor{my-full-orange}{HTML}{FF7F0E}
\definecolor{my-full-green}{HTML}{2CA02C}
\definecolor{my-full-red}{HTML}{d62728}
\definecolor{my-full-purple}{HTML}{9467bd}
\definecolor{my-full-brown}{HTML}{8c564b}
\definecolor{my-full-pink}{HTML}{e377c2}
\definecolor{my-full-gray}{HTML}{7f7f7f}
\definecolor{my-full-olive}{HTML}{bcbd22}
\definecolor{my-full-cyan}{HTML}{17becf}
\definecolor{netborder}{RGB}{  150, 109, 80}
\definecolor{netinside}{RGB}{ 204, 204, 204}
\definecolor{Sborder}{RGB}{31,120,180}
\definecolor{Sinside}{RGB}{166,206,227}
\definecolor{Pinside}{RGB}{230, 150, 71}
\definecolor{Pborder}{RGB}{186, 109, 34}
\definecolor{DPinside}{RGB}{151, 155, 156}
\definecolor{Correct}{RGB}{148, 206, 103}
\definecolor{Wrong}{RGB}{232, 116, 116}
\definecolor{ibpm}{RGB}{0, 0, 0}
\definecolor{sbbm}{RGB}{0, 0, 0}
\definecolor{cpgd}{RGB}{205, 205, 205}
\definecolor{cibp}{RGB}{152, 68, 100}
\definecolor{csbb}{RGB}{94, 204, 171}
\definecolor{cibpr}{RGB}{192, 175, 251}
\definecolor{ccolt}{RGB}{230, 161, 118}
\definecolor{ccibp}{RGB}{0, 103, 138}
\colorlet{my-blue}{my-full-blue!30}
\colorlet{my-orange}{my-full-orange!30}
\colorlet{my-green}{my-full-green!30}
\colorlet{my-red}{my-full-red!30}
\colorlet{my-purple}{my-full-purple!30}
\colorlet{my-brown}{my-full-brown!30}
\colorlet{my-pink}{my-full-pink!30}
\colorlet{my-gray}{my-full-gray!30}
\colorlet{my-olive}{my-full-olive!30}
\colorlet{my-cyan}{my-full-cyan!30}
\definecolor{ckeyword}{HTML}{7F0055}
\definecolor{ccomment}{HTML}{3F7F5F}
\definecolor{cstring}{HTML}{2A0099}
\lstdefinestyle{numbers}{
	numbers=left,
	framexleftmargin=20pt,
	numberstyle=\tiny,
	firstnumber=auto,
	numbersep=1em,
	xleftmargin=2em
}
\lstdefinestyle{layout}{
	frame=none,
	captionpos=b,
}
\lstdefinestyle{comment-style}{
	morecomment=[l]//,
	morecomment=[s]{/*}{*/},
	commentstyle={\color{ccomment}\itshape},
}
\lstdefinestyle{string-style}{
	morestring=[b]",%
	morestring=[b]',%
	stringstyle={\color{cstring}},
	showstringspaces=false,%
}
\lstdefinestyle{keyword-style}{
	keywordstyle={\ttfamily\bfseries},
	morekeywords={
		function,
		constructor,
		int,
		bool,
		return,
		returns,
		uint
	},
	morekeywords = [2]{},
	keywordstyle = [2]{\text},
	sensitive=true,
}
\lstdefinestyle{input-encoding}{
	inputencoding=utf8,
	extendedchars=true,
	literate=
	{ℝ}{$\reals$}1%
	{→}{$\rightarrow$}1%
	{α}{$\alpha$}1%
	{β}{$\beta$}1%
	{λ}{$\lambda$}1%
	{θ}{$\theta$}1%
	{ϕ}{$\phi$}1%
}
\lstdefinestyle{escaping}{
	moredelim={**[is][\color{blue}]{\%}{\%}},
	escapechar=|,
	mathescape=true
}
\lstdefinestyle{default-style}{
	basicstyle=\fontencoding{T1}\ttfamily\footnotesize,
	style=numbers,
	style=layout,
	style=comment-style,
	style=string-style,
	style=keyword-style,
	style=input-encoding,
	style=escaping,
	tabsize=2,
	upquote=true
}
\lstdefinelanguage{BASIC}{
	language=C++,
	style=default-style
}[keywords,comments,strings]%
\def\eqref#1{equation~\ref{#1}}
\def\1{\bm{1}}
\def\eps{{\epsilon}}
\def\vb{{\bm{b}}}
\def\vf{{\bm{f}}}
\def\vh{{\bm{h}}}
\def\vl{{\bm{l}}}
\def\vu{{\bm{u}}}
\def\vx{{\bm{x}}}
\def\vy{{\bm{y}}}
\def\mT{{\bm{T}}}
\def\mW{{\bm{W}}}
\def\mX{{\bm{X}}}
\DeclareMathAlphabet{\mathsfit}{\encodingdefault}{\sfdefault}{m}{sl}
\SetMathAlphabet{\mathsfit}{bold}{\encodingdefault}{\sfdefault}{bx}{n}
\newcommand{\E}{\mathbb{E}}
\newcommand{\R}{\mathbb{R}}
\DeclareMathOperator*{\argmax}{arg\,max}
\DeclareMathOperator*{\argmin}{arg\,min}
\DeclareMathOperator{\sign}{sign}
\renewcommand{\COMMENT}[2][.5\linewidth]{\leavevmode\hfill\makebox[#1][l]{//~#2}}
\algnewcommand\RETURN{\State \textbf{return} }
\crefname{thm}{Theorem}{Theorems}
\crefname{lem}{Lemma}{Lemmas}
\newcolumntype{d}[1]{S[table-format=#1]}
\def\extractcoord#1#2#3{
	\path let \p1=(#3) in \pgfextra{
		\pgfmathsetmacro#1{\x{1}/\pgf@xx}
		\pgfmathsetmacro#2{\y{1}/\pgf@yy}
		\xdef#1{#1} \xdef#2{#2}
	};
}
\newcommand{\bc}[1]{\mathcal{#1}}
\newcommand{\bs}[1]{\boldsymbol{#1}}
\DeclareMathOperator*{\relu}{ReLU}
\newcommand{\tool}{\textsc{SABR}\xspace}
\newcommand{\tooll}{\textbf{S}mall \textbf{A}dversarial \textbf{B}ounding \textbf{R}egions\xspace}
\newcommand{\ratiol}{subselection ratio\xspace}
\newcommand{\ratiols}{subselection ratios\xspace}
\newcommand{\ratios}{\lambda \xspace}
\newcommand{\mnbab}{\textsc{MN-BaB}\xspace}
\newcommand{\milp}{\textsc{MILP}\xspace}
\newcommand{\bcrown}{$\beta$-\textsc{CROWN}\xspace}
\newcommand{\ibp}{\textsc{IBP}\xspace}
\newcommand{\crownibp}{\textsc{CROWN-IBP}\xspace}
\newcommand{\deeppoly}{\textsc{DeepPoly}\xspace}
\newcommand{\deepz}{\textsc{DeepZ}\xspace}
\newcommand{\ibpr}{\textsc{IBP-R}\xspace}
\newcommand{\diffai}{\textsc{DiffAI}\xspace}
\newcommand{\colt}{\textsc{COLT}\xspace}
\newcommand{\fgsm}{\textsc{FGSM}\xspace}
\newcommand{\boxd}{\textsc{Box}\xspace}
\newcommand{\sortnet}{\textsc{SortNet}\xspace}
\newcommand{\cifar}{CIFAR-10\xspace}
\newcommand{\TIN}{\textsc{TinyImageNet}\xspace}
\newcommand{\mnist}{\textsc{MNIST}\xspace}
\newcommand{\cnns}{\texttt{CNN7}\xspace}
\newcommand{\cnnsn}{\texttt{CNN7-narrow}\xspace}
\renewcommand{\th}{\textsuperscript{th}\xspace}
\newcommand{\barrows}{\tikz[]{\draw[-{Triangle[width=2.2pt,length=1.5pt]}, line width=1.0pt,Sborder](0,0.22) -- (0, 0);}\xspace}
\newcommand{\barrowm}{\tikz[]{\draw[-{Triangle[width=3pt,length=2pt]}, line width=1.5pt,Sborder](0,0.22) -- (0, 0);}\xspace}
\newcommand{\barrowl}{\tikz[]{\draw[-{Triangle[width=4pt,length=3pt]}, line width=2.0pt,Sborder](0,0.22) -- (0, 0);}\xspace}
\newcommand{\tbox}[1]{\tikz[]{\def \x{#1} \draw[-, line width=0.6pt](0,0) -- (0, \x) -- (\x, \x) -- (\x, 0) --cycle;}\xspace}
\newcommand{\tdbox}[1]{\tikz[]{\def \x{#1} \draw[-, line width=0.6pt, dash pattern=on 1.5pt off 1pt](0,0) -- (0, \x) -- (\x, \x) -- (\x, 0) --cycle;}\xspace}
\newcommand{\markersbb}{\tikz[]{\node[fill, diamond, aspect=1, color=csbb, inner sep=0pt, minimum size=2.5mm]{};}\xspace}
\newcommand{\markeribpr}{\tikz[]{\node[fill, circle, aspect=1, color=cibpr, inner sep=0pt, minimum size=2.5mm]{};}\xspace}
\newcommand{\markerccibp}{\tikz[]{\node[fill, aspect=1, color=ccibp, inner sep=0pt, minimum size=2.1mm]{};}\xspace}
\newcommand{\markeribp}{\tikz[]{\def \x{0.19} \draw[-, line width=1.8pt,cibp ](0,0) -- (0, \x)	(-0.5*\x, 0.5*\x) -- (0.5*\x,  0.5*\x);}\xspace}
\newcommand{\markercolt}{\tikz[]{\def \x{0.18} \draw[-, line width=1.8pt, ccolt](0,0) -- (\x, \x) (0, \x) -- (\x, 0);}\xspace}
\newcommand{\markerb}[1]{\tikz[]{\node[fill, aspect=1, color=#1, inner sep=0pt, minimum size=2.1mm]{};}\xspace}
\renewcommand\theHALG@line{\thealgorithm.\arabic{ALG@line}}
\newcommand{\crefrangeconjunction}{--}
\crefname{listing}{Lst.}{listings}
\crefname{line}{Lin.}{Lin.}
\crefname{appendix}{App.}{App.}
\newcommand{\appref}[1]{%
	\ifbool{includeappendix}{\cref{#1}}{the appendix}%
}
\newcommand{\Appref}[1]{%
	\ifbool{includeappendix}{\cref{#1}}{The appendix}%
}
\title{Certified Training: \\Small Boxes are All You Need}
\author{Mark Niklas Müller\thanks{Equal contribution},~~Franziska Eckert\footnotemark[1],~~Marc Fischer \& Martin Vechev \\
	Department of Computer Science\\
	ETH Zurich, Switzerland\\
	\texttt{\{mark.mueller, marc.fischer, martin.vechev\}@inf.ethz.ch, eckertf@student.ethz.ch}
	\vspace*{-2mm}
}
\begin{document}

\maketitle

\begin{abstract}
To obtain, deterministic guarantees of adversarial robustness, specialized training methods are used. We propose, \tool, a novel such certified training method, based on the key insight that propagating interval bounds for a small but carefully selected subset of the adversarial input region is sufficient to approximate the worst-case loss over the whole region while significantly reducing approximation errors. We show in an extensive empirical evaluation that \tool outperforms existing certified defenses in terms of both \emph{standard and certifiable accuracies} across perturbation magnitudes and datasets, pointing to a new class of certified training methods promising to alleviate the robustness-accuracy trade-off.
\end{abstract}

\section{Introduction} \label{sec:introduction}
As neural networks are increasingly deployed in safety-critical domains, formal robustness guarantees against adversarial examples \citep{BiggioCMNSLGR13,SzegedyZSBEGF13} are becoming ever more important.
However, despite significant progress, specialized training methods that improve certifiability at the cost of severely reduced accuracies are still required to obtain deterministic guarantees.

Given an input region defined by an adversary specification, both training and certification methods compute a network's reachable set by propagating a symbolic over-approximation of this region through the network \citep{SinghGMPV18,SinghGPV19,GowalDSBQUAMK18}. Depending on the propagation method, both the computational complexity and approximation-tightness can vary widely. For certified training, an over-approximation of the worst-case loss is computed from this reachable set and then optimized \citep{MirmanGV18,WongSMK18}. Surprisingly, the least precise propagation methods yield the highest certified accuracies as more precise methods induce harder optimization problems \citep{JovanovicBBV21}. However, the large approximation errors incurred by these imprecise methods lead to over-regularization and thus poor accuracy. Combining precise worst-case loss approximations and a tractable optimization problem is thus the core challenge of certified training.

In this work, we tackle this challenge and propose a novel certified training method, \tool, \tooll, based on the following key insight: by propagating small but carefully selected subsets of the adversarial input region with imprecise methods (i.e., \boxd), we can obtain \emph{both} well-behaved optimization problems and precise approximations of the worst-case loss. 
This yields less over-regularized networks, allowing \tool to improve on state-of-the-art certified defenses in terms of both standard \emph{and} certified accuracies across settings, thereby pointing to a new class of certified training methods.

\vspace{-1mm}
\paragraph{Main Contributions} Our main contributions are:
\vspace{-1.5mm}
\begin{itemize}
 \setlength\itemsep{0.15mm}
\item A novel certified training method, \tool, reducing over-regularization to improve both standard and certified accuracy (\cref{sec:method}).
\item A theoretical investigation motivating \tool by deriving new insights into the growth of \boxd relaxations during propagation (\cref{sec:theory}).
\item An extensive empirical evaluation demonstrating that \tool outperforms \emph{all} state-of-the-art certified training methods in terms of both \emph{standard and certifiable accuracies} on \mnist, \cifar, and \TIN (\cref{sec:experiments}).
\end{itemize}

\section{Background} \label{sec:background}
In this section, we provide the necessary background for \tool.
\paragraph{Adversarial Robustness}
Consider a classification model $\vh\colon \R^{d_\text{in}} \mapsto \R^{c}$ that, given an input $\vx \in \bc{X} \subseteq \R^{d_\text{in}}$, predicts numerical scores $\vy := \vh(\vx)$ for every class. We say that $\vh$ is adversarially robust on an $\ell_p$-norm ball $\bc{B}_p^{\epsilon_p}(\vx)$ of radius $\epsilon_p$ if it consistently predicts the target class $t$ for all perturbed inputs $\vx' \in \bc{B}_p^{\epsilon_p}(\vx)$. More formally, we define \emph{adversarial robustness} as:
\begin{equation}
	\label{eq:adv_robustness}
	\argmax_j h(\vx')_j = t, \quad \forall \vx' \in \bc{B}_p^{\epsilon_p}(\vx) := \{\vx' \in \bc{X} \mid \|\vx -\vx'\|_p \leq \epsilon_p\}.
\end{equation}
\paragraph{Neural Network Verification}
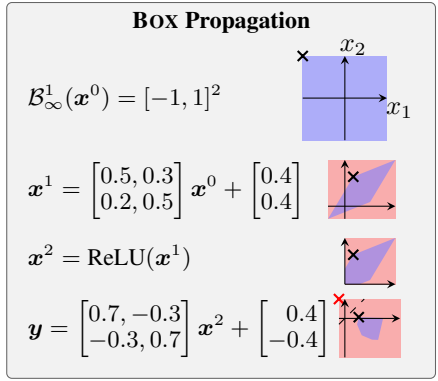
\begin{wrapfigure}[18]{r}{0.417 \textwidth}
\vspace{-4mm}
\centering
\tikzset{
	point/.style={
		thick,
		draw=black,
		cross out,
		inner sep=0pt,
		minimum width=3pt,
		minimum height=3pt,
	},
	point_r/.style={
		thick,
		draw=red,
		cross out,
		inner sep=0pt,
		minimum width=3pt,
		minimum height=3pt,
	},
}

\begin{tikzpicture}[scale=0.70]
\tikzset{>=latex}

    \node[draw=black!60, fill=black!05, rectangle,     	rounded corners=2pt,
minimum width=5.7cm, minimum height=5.0cm] (bounding_box) at (0, 0) {
};
			\node[anchor=north] at (bounding_box.north) {\footnotesize\textbf{\boxd Propagation}};

	\def \dy{-1.35cm}
	\def \dx{-5.7}
	\def \ddx{1.85cm}

    \begin{scope}[xshift=\ddx, yshift=-1.3*\dy]
		\begin{scope}[xshift=0.5cm,scale=0.8]
			\coordinate (O) at ({0.0000},{0.0000});
			\coordinate (X) at ({1.0000},{0.0000});
			\coordinate (Y) at ({0.0000},{1.0000});
			\coordinate (XO) at ({-1.0000},{0.0000});
			\coordinate (YO) at ({0.0000},{-1.0000});

			\coordinate (input_head) at ({0.0000},{0.0000});
			\coordinate (input_p_0) at ({-1.0000},{-1.0000});
			\coordinate (input_p_1) at ({1.0000},{-1.0000});
			\coordinate (input_p_2) at ({1.0000},{1.0000});
			\coordinate (input_p_3) at ({-1.0000},{1.0000});
	
			\fill [fill=blue!50, opacity=0.6, rounded corners=0mm] (input_p_0) -- (input_p_1) -- (input_p_2) -- (input_p_3)  -- cycle;
			\draw[-stealth] (XO) -- (X);
			\draw[-stealth] (YO) -- (Y);
			
			\node at ($(X)+(0.3,-0.3)$) {$x_1$};
			\node at ($(Y)+(0.2,0.2)$) {$x_2$};
			\node[point] at (input_p_3) {};
	
			\end{scope}
		
			\node[anchor=west] at (\dx, 0) {\footnotesize $\bc{B}_\infty^{1}(\vx^0) = [-1,1]^2$};

		\end{scope}
		
		\begin{scope}[xshift=\ddx, yshift=-0*\dy]
			\begin{scope}[xshift=0.5cm,  yshift=-0.3cm,scale=0.8]%
			\coordinate (O) at ({0.0000},{0.0000});
			\coordinate (X) at ({1.2000},{0.0000});
			\coordinate (Y) at ({0.0000},{1.1000});
			\coordinate (XO) at ({-0.4000},{0.0000});
			\coordinate (YO) at ({0.0000},{-0.3000});
			
			\coordinate (linear_box_p_0) at ({-0.4000},{-0.3000});
			\coordinate (linear_box_p_1) at ({1.2000},{-0.3000});
			\coordinate (linear_box_p_2) at ({1.2000},{1.1000});
			\coordinate (linear_box_p_3) at ({-0.4000},{1.1000});
			\fill [fill=red!50, opacity=0.6, rounded corners=0mm] (linear_box_p_0) -- (linear_box_p_1) -- (linear_box_p_2) -- (linear_box_p_3)  -- cycle;
			
			\coordinate (linear_head) at ({0.4000},{0.4000});
			\coordinate (linear_p_0) at ({0.2000},{0.7000});
			\coordinate (linear_p_1) at ({-0.4000},{-0.3000});
			\coordinate (linear_p_2) at ({0.6000},{0.1000});
			\coordinate (linear_p_3) at ({1.2000},{1.1000});
			
			\fill [fill=blue!50, opacity=0.6, rounded corners=0mm] (linear_p_0) -- (linear_p_1) -- (linear_p_2) -- (linear_p_3)  -- cycle;
			
			\node[point] at (linear_p_0) {};
			
			\draw[-stealth] (XO) -- (X);
			\draw[-stealth] (YO) -- (Y);
		\end{scope}
		\node[anchor=west] at (\dx,0) {\footnotesize $\vx^1 = \begin{bmatrix}
			0.5, 0.3 \\
			0.2, 0.5 \\
			\end{bmatrix}\vx^0 + \begin{bmatrix}
			0.4\\0.4
			\end{bmatrix}$};
		
	\end{scope}
	
	\begin{scope}[xshift=\ddx, yshift=0.95*\dy]
		\begin{scope}[xshift=0.5cm,  yshift=-0.5cm,scale=0.8]%
		\coordinate (O) at ({0.0000},{0.0000});
		\coordinate (X) at ({1.2000},{0.0000});
		\coordinate (Y) at ({0.0000},{1.1000});
		\coordinate (XO) at ({-0.0000},{0.0000});
		\coordinate (YO) at ({0.0000},{-0.0000});
		\coordinate (XX) at ({5.0000},{0.0000});
		\coordinate (YY) at ({0.0000},{5.0000});
		
		\coordinate (linear_box_p_0) at ({-0.000},{-0.000});
		\coordinate (linear_box_p_1) at ({1.2000},{-0.000});
		\coordinate (linear_box_p_2) at ({1.2000},{1.1000});
		\coordinate (linear_box_p_3) at ({-0.000},{1.1000});
		\fill [fill=red!50, opacity=0.6, rounded corners=0mm] (linear_box_p_0) -- (linear_box_p_1) -- (linear_box_p_2) -- (linear_box_p_3)  -- cycle;

		\coordinate (linear_p_0) at ({0.2000},{0.7000});
		\coordinate (linear_p_1) at ({-0.4000},{-0.3000});
		\coordinate (linear_p_2) at ({0.6000},{0.1000});
		\coordinate (linear_p_3) at ({1.2000},{1.1000});

  		\path[name path=x] (O) -- (XX);
  		\path[name path=y] (O) -- (YY);
  		\path[name path=top] (linear_p_1) -- (linear_p_0);
  		\path[name path=bottom] (linear_p_1) -- (linear_p_2);
		\path[name intersections={of=y and top}];
		\coordinate (linear_p_4) at (intersection-1);
		\path[name intersections={of=x and bottom}];
		\coordinate (linear_p_5) at (intersection-1);

		\fill [fill=blue!50, opacity=0.6, rounded corners=0mm] (linear_p_0) -- (linear_p_4) --(O) -- (linear_p_5) -- (linear_p_2) -- (linear_p_3)  -- cycle;
		
		\node[point] at (linear_p_0) {};
				
		\draw[-stealth] (XO) -- (X);
		\draw[-stealth] (YO) -- (Y);
		\end{scope}
		\node[anchor=west] at (\dx, 0) {\footnotesize$\vx^2 = \text{ReLU}(\vx^1)$};
	
	\end{scope}

	\begin{scope}[xshift=\ddx, yshift=1.95*\dy]
		\begin{scope}[xshift=0.5cm, yshift=0.2cm,scale=0.8]%
			\coordinate (O) at ({0.0000},{0.0000});
			\coordinate (DA) at ({-0.1400},{-0.1400});
			\coordinate (DB) at ({0.4600},{0.4600});
			\coordinate (X) at ({1.3107},{0.0000});
			\coordinate (Y) at ({0.0000},{0.4600});
			\coordinate (XO) at ({-0.1400},{0.0000});
			\coordinate (YO) at ({0.0000},{-0.9250});					
			\coordinate (XX) at ({5.0000},{0.0000});
			\coordinate (YY) at ({0.0000},{5.0000});
			
			\coordinate (box_2_p_0) at ({-0.1400},{0.4600});
			\coordinate (box_2_p_1) at ({-0.1400},{-0.9250});
			\coordinate (box_2_p_2) at ({1.3107},{-0.9250});
			\coordinate (box_2_p_3) at ({1.3107},{0.4600});
			\fill [fill=red!50, opacity=0.6, rounded corners=0mm] (box_2_p_0) -- (box_2_p_1) -- (box_2_p_2) -- (box_2_p_3)  -- cycle;

			\coordinate (linear_2_p_0) at ({0.7900},{-0.5100});
			\coordinate (linear_2_p_1) at ({0.9100},{0.0100});
			\coordinate (linear_2_p_2) at ({0.3300},{0.0300});
			\coordinate (linear_2_p_3) at ({0.2900},{-0.1433});
			\coordinate (linear_2_p_4) at ({0.4000},{-0.4000});
			\coordinate (linear_2_p_5) at ({0.6450},{-0.5050});

			\fill [fill=blue!50, opacity=0.6, rounded corners=0mm] (linear_2_p_0) -- (linear_2_p_1) -- (linear_2_p_2) -- (linear_2_p_3) -- (linear_2_p_4) -- (linear_2_p_5)  -- cycle;

			\node[point] at (linear_2_p_2) {};
			\node[point_r] at (box_2_p_0) {};

			\draw[-stealth] (XO) -- (X);
			\draw[-stealth] (YO) -- (Y);
			\draw[-,dashed] (DA) -- (DB);
		\end{scope}

		\node[anchor=west] at (\dx,0) {\footnotesize $\vy = \begin{bmatrix}
			0.7, -0.3 \\
			-0.3, 0.7 \\
			\end{bmatrix}\vx^2 + \begin{bmatrix}
			\;\;\, 0.4\\-0.4
			\end{bmatrix}$};

\end{scope}

	\coordinate (z) at ({0.1231},{0.0846});
\end{tikzpicture}
\vspace{-6mm}
\caption{Comparison of exact (blue) and \boxd (red) propagation through a one layer network. We show the concrete points maximizing the logit difference $y_2-y_1$ as a black $\times$ and the corresponding relaxation as a red \textcolor{red}{$\times$}.}%
\label{fig:ibp_example}
\end{wrapfigure}

To verify that a neural network $\vh$ is adversarially robust, several verification techniques have been proposed.

A simple but effective such method is verification with the \boxd relaxation \citep{MirmanGV18}, also called interval bound propagation (IBP) \citep{GowalIBP2018}. Conceptually, we first compute an over-approximation of a network's reachable set by propagating the input region $\bc{B}_p^{\epsilon_p}(\vx)$ through the neural network and then check whether all outputs in the reachable set yield the correct classification.
This propagation sequentially computes a hyper-box (each dimension is described as an interval) relaxation of a layer's output, given a hyper-box input.
As an example, consider an $L$-layer network $\vh = \vf_L \circ \bs{\sigma} \circ \vf_{L-2} \circ \dotsc \circ \vf_1$, with linear layers $\vf_i$ and ReLU activation functions $\bs{\sigma}$.
Given an input region $\bc{B}_p^{\epsilon_p}(\vx)$, we over-approximate it as a hyper-box, centered at $\bar{\vx}^0 := \vx$ and with radius $\bs{\delta}^0 := \epsilon_p$, such that we have the $i$\th dimension of the input $\vx^0_i \in [\bar{x}^0_i - \delta^0_i, \bar{x}^0_i + \delta^0_i]$.
Given a linear layer $\vf_i(\vx^{i-1}) = \mW \vx^{i-1} + \vb =: \vx^i$, we obtain the hyper-box relaxation of its output with centre $\bar{\vx}^i = \mW \bar{\vx}^{i-1} + \vb$ and radius $\bs{\delta}^{i} = |\mW| \bs{\delta}^{i-1}$, where $| \cdot |$ denotes the elementwise absolute value.
A ReLU activation $\relu(\vx^{i-1}) := \max(0,\vx^{i-1})$ can be over-approximated by propagating the lower and upper bound separately, resulting in a output hyper-box with $\bar{\vx}^{i} = \tfrac{\vu^i + \vl^i}{2}$ and $\bs{\delta}^i = \tfrac{\vu^i - \vl^i}{2}$
where  $\vl^i = \relu(\bar{\vx}^{i-1} - \bs{\delta}^{i-1})$ and $\vu^i = \relu(\bar{\vx}^{i-1} + \bs{\delta}^{i-1})$.
Proceeding this way for all layers, we obtain lower and upper bounds on the network output $\vy$ and can check if the output score of the target class is greater than that of all other classes by computing the upper bound on the logit difference  $y^\Delta_i := y_i - y_t$ and then checking whether $y^\Delta_i < 0, \; \forall i \neq t$.

We illustrate this propagation process for a one-layer network in \cref{fig:ibp_example}. There, the blue shapes (\markerb{blue!40}) show an exact propagation of the input region and the red shapes (\markerb{red!40}) their hyper-box relaxation. Note how after the first linear and ReLU layer (third row), the relaxation (red) contains already many points not reachable via exact propagation (blue), despite it being the smallest hyper-box containing the exact region. These so-called approximation errors accumulate quickly, leading to an increasingly imprecise abstraction, as can be seen by comparing the two shapes after an additional linear layer (last row).
To verify that this network classifies all inputs in $[-1,1]^2$ to class $1$, we have to show the upper bound of the logit difference $y_2-y_1$ to be less than $0$. While the concrete maximum of $-0.3 \geq y_2-y_1$ (black $\times$) is indeed less than $0$, showing that the network is robust, the \boxd relaxation only yields $0.6 \geq y_2-y_1$ (red {\color{red}$\times$}) and is thus too imprecise to prove it.

Beyond \boxd, more precise verification approaches track more relational information at the cost of increased computational complexity \citep{PalmaIBPR22,WangZXLJHK21}.
A recent example is \mnbab \citep{FerrariMJV22}, which improves on \boxd in two key ways: First, instead of propagating axis-aligned hyper-boxes, it uses much more expressive polyhedra, allowing linear layers to be captured exactly and ReLU layers much more precisely. Second, if the result is still too imprecise, the verification problem is recursively split into easier ones, by introducing a case distinction between the two linear segments of the ReLU function. This is called the branch-and-bound (BaB) approach \citep{BunelLTTKK20}. We refer the interested reader to \citet{FerrariMJV22} for more details.

\paragraph{Training for Robustness}
For neural networks to be certifiably robust, special training is necessary.
Given a data distribution $(\vx, t) \sim \bc{D}$, standard training generally aims to find a network parametrization $\bs{\theta}$ that minimizes the expected cross-entropy loss (see \cref{app:ce_deviation}):
\begin{equation}\label{eq:std_train}
	\theta_\text{std} = \argmin_\theta \E_\bc{D} [\bc{L}_\text{CE}(\vh_{\bs{\theta}}(\vx),t)], \quad \text{with} \quad \bc{L}_\text{CE}(\vy, t) = \ln\big(1 + \sum_{i \neq t} \exp(y_i-y_t)\big).
\end{equation}
When training for robustness, we, instead, wish to minimize the expected \emph{worst-case loss} around the data distribution, leading to the min-max optimization problem:
\begin{equation}
\label{eq:rob_opt}
	\theta_\text{rob} = \argmin_\theta \mathbb{E}_{\bc{D}} \big[ \max_{\vx' \in \bc{B}_p^{\epsilon_p}(\vx) }\bc{L}_\text{CE}(\vh_{\bs{\theta}}(\vx'),t) \big].
\end{equation}
Unfortunately, solving the inner maximization problem is generally intractable. Therefore, it is commonly under- or over-approximated, yielding adversarial and certified training, respectively. For notational clarity, we henceforth drop the subscript $p$.

\paragraph{Adversarial Training}
Adversarial training optimizes a lower bound on the inner optimization objective in \cref{eq:rob_opt} by first computing concrete examples $\vx'\in \bc{B}^{\epsilon}(\vx)$ maximizing the loss term and then optimizing the network parameters $\bs{\theta}$ for these samples.
Typically, $\vx'$ is computed by initializing $\vx'_0$ uniformly at random in $\bc{B}^{\epsilon}(\vx)$ and then updating it over $N$ projected gradient descent steps (PGD) \citep{MadryMSTV18} 
$\vx'_{n+1}=\Pi_{\bc{B}^{\epsilon}(\vx)}\vx'_n + \alpha \sign(\nabla_{\vx'_n} \bc{L}_\text{CE}(\vh_{\bs{\theta}}(\vx'_n),t))$, 
with step size $\alpha$ and projection operator $\Pi$.
While networks trained this way typically exhibit good empirical robustness, they remain hard to formally verify and sometimes vulnerable to stronger or different attacks \citep{TramerCBM20,Croce020a}.

\paragraph{Certified Training} Certified training optimizes an upper bound on the inner maximization objective in \cref{eq:rob_opt}, obtained via a bound propagation method. These methods compute an upper bound $\vu_{\vy^\Delta}$ on the logit differences $\vy^\Delta := \vy - y_t$, as described above, to obtain the robust cross-entropy loss $\bc{L}_\text{CE,rob}(\bc{B}^{\epsilon}(\vx),t) = \bc{L}_\text{CE}(\vu_{\vy^\Delta},t)$.
We will use \boxd to refer to the verification and propagation approach, and \ibp to refer to the corresponding training method.

Surprisingly, using the imprecise \boxd relaxation \citep{MirmanGV18,GowalIBP2018,ShiBetterInit2021} consistently produces better results than methods based on tighter abstractions \citep{ZhangCXGSLBH20,BalunovicV20,WongSMK18}. \citet{JovanovicBBV21} trace this back to the optimization problems induced by the more precise methods becoming intractable to solve. 
While the heavily regularized, \ibp trained networks are amenable to certification, they suffer from severely reduced (standard) accuracies. Overcoming this robustness-accuracy trade-off remains a key challenge of robust machine learning.

\section{Method -- Small Regions for Certified Training} \label{sec:method}
\vspace{-1.5mm}
To train networks that are not only robust and amenable to certification but also retain comparatively high standard accuracies, we propose the novel certified training method, \tool~--- \tooll. 
We leverage the key insight that computing an over-approximation of the worst-case loss over a small but carefully selected subset of the input region $\bc{B}^{\epsilon}(\vx)$ often yields a good proxy for the worst-case loss over the whole region while significantly reducing approximation errors. %

\begin{figure}[t]
	\centering
	\begin{adjustbox}{width=\textwidth}
		\tikzset{
	point/.style={
		line width=0.5pt,,
		draw=black,
		cross out,
		inner sep=0pt,
		minimum width=3pt,
		minimum height=3pt,
	},
}

\begin{tikzpicture}
	\tikzstyle{toolstyle}=[dash pattern=on 3pt off 0pt]
	\tikzstyle{ibpstyle}=[dash pattern=on 5pt off 2pt]
    
    \def \dy{-0.2 cm}

    \begin{scope}[xshift=-0.4cm]   
        
        \node
		[fill=black!05, rectangle, rounded corners=2pt,
		minimum width=2.8cm, minimum height=3.4cm
		] (input_box) at (0.5, 1.0) {
			
		};
		\node[align=center, anchor=north] at (input_box.north) {\textbf{Input} $\vx^0$};
	    \begin{scope}[yshift=\dy]   
		        \draw[fill=Correct!80, opacity=0.8, draw=none] (-0.5, 0) -- (1.1, 0) -- (1.1, 0.3) -- (1.5, 0.3) -- (1.5, 2) -- (-0.5, 2) -- (-0.5, 0);
		    \draw[fill=Wrong!80, opacity=0.8, draw=none] (1.1, 0) -- (1.5, 0) -- (1.5, 0.3) -- (1.1, 0.3) -- (1.1, 0);

		    \draw[sbbm, thick, toolstyle] (0.9, 0) -- (1.5, 0) -- (1.5, 0.6) -- (0.9, 0.6) -- cycle;

		    \draw[ibpm,thick,ibpstyle] (-0.5, 0) -- (1.5, 0) -- (1.5, 2) -- (-0.5, 2) -- cycle;

    		\node[point] at (1.3,0.1) {};

		    \draw[-stealth] (0.5, 1) -- (0.7, 1.5);
		    \draw[-stealth] (0.7, 1.5) -- (0.9, 0.9);
		    \draw[-stealth] (0.9, 0.9) -- (0.8, 0.3);
		    \draw[-stealth] (0.8, 0.3) -- (1.3,0.1);
	    \end{scope}

    \end{scope}

    \draw[-stealth] (1.6, 1) -- (1.9, 1);
    \node[rectangle, fill, minimum width=1.5cm, align=center, very thick, color=netinside, text=black, rotate=90]
    at (2.25, 1) {FC};
    \draw[-stealth] (2.6, 1) -- (2.9, 1);
    
    \def\ang{atan(2.4/0.6)}
    
    \begin{scope}[xshift=0.4cm]
    
            \node
	[fill=black!05, rectangle, rounded corners=2pt,
	minimum width=2.7cm, minimum height=3.4cm
	] (intermediate_box) at (3.95, 1.0) {
	};
	\node[align=center, anchor=north] at (intermediate_box.north) {\textbf{Latent Space} $\vx^1$};

	\begin{scope}[yshift=\dy]   
        \draw[fill=Correct!80, opacity=0.8, draw=none] (3.0, -0.2) -- (4.3, -0.2) -- (4.9, 2.2) -- (3.6, 2.2) -- (3.0, -0.2);
    \draw[fill=Wrong!80, opacity=0.8, draw=none] (4.3, -0.2)--({4.3 + 0.36/tan(\ang)},0.16) -- ({4.0 + 0.36/tan(\ang)},0.16) -- (4.0, -0.2)--(4.3, -0.2);
    
    \draw[sbbm,thick, toolstyle] (3.9, -0.2) -- (3.9, 0.3) -- ({4.3+0.5/tan(\ang)}, 0.3) -- ({4.3+0.5/tan(\ang)}, -0.2) --(3.9, -0.2);
	\node[point] at (4.17,-0.08) {};
    
    \draw[ibpm,thick,ibpstyle] (3.0, -0.2) -- (4.9, -0.2) -- (4.9, 2.2) -- (3.0, 2.2) -- (3.0, -0.2);
    \end{scope}
    \end{scope}

    \begin{scope}[xshift=0.8cm]

    \draw[-stealth] (5.0, 1) -- (5.3, 1);
    \node[rectangle, fill, minimum width=1.5cm, align=center, very thick, color=netinside, text=black, rotate=90]
    at (5.65, 1) {ReLU};
    \draw[-stealth] (6.0, 1) -- (6.3, 1);

    \node[rectangle, align=center, very thick, text=black]
    at (6.7, 1) {$\dotsc$};
    
    \draw[-stealth] (7.0, 1) -- (7.3, 1);
    
    \node[rectangle, fill, minimum width=1.5cm, align=center, very thick, color=netinside, text=black, rotate=90]
    at (7.65, 1) {FC};
    \draw[-stealth] (8.0, 1) -- (8.3, 1);
    
    \end{scope}

    \begin{scope}[xshift=1cm]
    \node
	[fill=black!05, rectangle,     	rounded corners=2pt,
	minimum width=5.7cm, minimum height=3.6cm] (output) at (11.1, 1.0) {
	};
	\node[align=center, anchor=north] at (output.north) {\textbf{Output} $\vy^\Delta$};
    
    \def \yc{1.1}
    
 	\begin{scope}[yshift=\dy]   	
        \draw[fill=Correct!80, opacity=0.8, draw=none] (9.0, 0.1) -- (9.1, 0.6) -- (9.5, \yc)-- (9.4, 1.45) --(9.9, 1.35) -- (10.0, \yc) -- (10.1, 0.5) -- (10.4, -0.1) -- (9.5,-0.3)--(9.0, 0.1);
    \draw[fill=Wrong!80, opacity=0.8, draw=none] (9.5, \yc) -- (10.0, \yc) -- (9.9, 1.35) -- (9.4, 1.45) -- (9.5, \yc);
	\coordinate (adv)at (9.6,1.35);

    \draw[-{Triangle[width=10pt,length=7pt]}, line width=5pt,Sborder!80](8.85,2.4) -- (8.85, \yc);
    \draw[-{Triangle[width=6pt,length=4pt]}, line width=3pt, Sborder!80](9.3,1.7) -- (9.3, \yc);
    \draw[-{Latex[length=1mm,width=1mm]}, line width=1pt, Sborder!80](adv.center) -- (9.6, \yc);
 
    \node[point] () at (adv) {};
        
    \draw[ibpm,thick,ibpstyle] (8.6, -0.4) -- (10.7, -0.4) -- (10.7, 2.4) -- (8.6, 2.4) -- (8.6, -0.4);
    \draw[sbbm,thick, toolstyle] (10.1, 0.8) -- (10.1, 1.7) -- (9.15, 1.7) -- (9.15, 0.8) -- (10.1, 0.8);
    
    \draw[color=Sborder, thick] (8.4, \yc) -- (13, \yc);
    \node[font=\footnotesize, text=black, align=left, anchor=south west] at (11.5,\yc) {incorrect};
    \node[font=\footnotesize, text=black, align=left, anchor=north west] at (11.5,\yc) {correct};
    
    \draw[black] (10.1,1.7) -- (10.9, 2.25);
    \draw[black] (10.7,2.2) -- (10.9, 2.25)node[right, font=\footnotesize, text=black] {Box relaxation};
    \draw[black] (adv) -- (10.9, 1.8)node[right, font=\footnotesize, text=black] {Adversarial example};
    \draw[black] (10.25, 0.2) -- (10.9, 0.2) node[right, font=\footnotesize, text=black] {Exact propagation};
    \end{scope}
    \end{scope}

\end{tikzpicture}
	\end{adjustbox}
	\vspace{-5mm}
	\caption[]{Illustration of \tool training. Instead of propagating a \boxd approximation (dashed box \tdbox{0.18}) of the whole input region (red \markerb{Wrong!65} and green \markerb{Correct!65} shapes in input space), \tool propagates a small subset of this region (solid box \tbox{0.18}), selected to contain the adversarial example (black $\times$) and thus the misclassified region (\markerb{Wrong!65} red). The smaller \boxd accumulates much fewer approximation errors during propagation, leading to a significantly smaller output relaxation, which induces much less regularization (medium blue \barrowm) than training with the full region (large blue \barrowl), but more than training with just the adversarial example (small blue \barrows).}
	\label{fig:method_overview}
	\vspace{-5mm}
\end{figure}
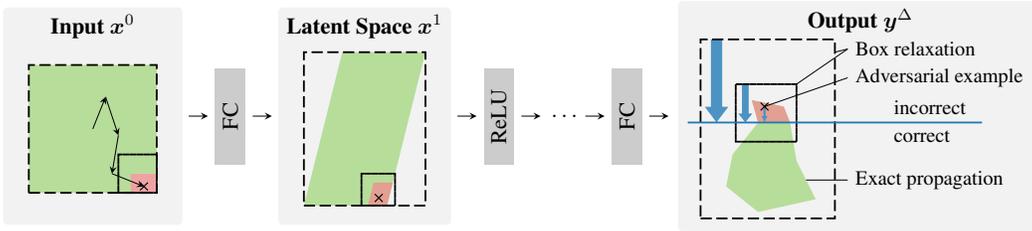

We illustrate this intuition in \cref{fig:method_overview}. Existing certified training methods always consider the whole input region (dashed box \tdbox{0.18} in the input panel). Propagating such large regions through the network yields quickly growing approximation errors and thus very imprecise over-approximations of the actual worst-case loss (compare the reachable set in red \markerb{Wrong!65} and green \markerb{Correct!65} to the dashed box \tdbox{0.18} in the output panel), causing significant over-regularization (large blue arrow \barrowl).
Adversarial training methods, in contrast, only consider individual points in the input space ($\times$ in \cref{fig:method_overview}) and often fail to capture the actual worst-case loss. This leads to insufficient regularization (small blue arrow \barrows in the output panel) and yields networks which are not amenable to certification and potentially not robust. 

We tackle this problem by propagating small, adversarially chosen subsets of the input region (solid box \tbox{0.18} in the input panel of \cref{fig:method_overview}), which we call the \emph{propagation region}. This leads to significantly reduced approximation errors (see the solid box \tbox{0.18} in the output panel) inducing a level of regularization in-between certified and adversarial training methods (medium blue arrow \barrowm), allowing us to train networks that are both robust and accurate.

More formally, we define an auxiliary objective for the robust optimization problem \cref{eq:rob_opt} as
\begin{equation}
\label{eq:sabr_opt}
\bc{L}_\text{\tool} = \max_{\vx^* \in \bc{B}^{\tau}(\vx')} \bc{L}_\text{CE}(\vx^*, t),
\end{equation}
where we replace the maximum over the whole input region $\bc{B}^{\epsilon}(\vx)$ with that over a carefully selected subset $\bc{B}^{\tau}(\vx')$. While choosing $\vx' = \Pi_{\bc{B}^{\epsilon -\tau}(\vx)} \argmax_{\vx^* \in \bc{B}^{\epsilon}(\vx)} \bc{L}_\text{CE}(\vx^*, t)$ would recover the original robust training problem (\cref{eq:rob_opt}), both, computing the maximum loss over a given input region (\cref{eq:sabr_opt}) and finding a point that realizes this loss is generally intractable. 
Instead, we instantiate \tool by combining different approximate approaches for the two key components: a) a method for choosing the location $\vx'$ and size $\tau$ of the propagation region, and b) a method used for propagating the thus selected region. Note that we thus generally do not obtain a sound over-approximation of the loss on $\bc{B}^\eps(\vx)$.
Depending on the size of the propagated region $\bc{B}^{\tau}(\vx')$, \tool can be seen as a continuous interpolation between adversarial training for infinitesimally small regions $\tau = 0$ and standard certified training for the full input region $\tau = \epsilon$.

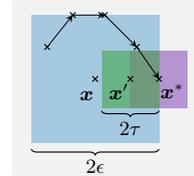
\begin{wrapfigure}[9]{r}{0.195 \textwidth}
\vspace{-5mm}
\centering
\tikzset{
	point/.style={
		line width = 0.5pt,
		draw=black,
		cross out,
		inner sep=0pt,
		minimum width=2pt,
		minimum height=2pt,
	},
	point_r/.style={
		line width = 0.5pt,
		draw=red,
		cross out,
		inner sep=0pt,
		minimum width=3pt,
		minimum height=3pt,
	},
}

\scalebox{0.85}{
\begin{tikzpicture}[scale=1.0]
\tikzset{>=latex}

        \node
		[fill=black!05, rectangle, rounded corners=2pt,
		minimum width=3.0cm, minimum height=2.8cm
		] (input_box) at (0.0, 0.0) {};

	    \begin{scope}[xshift=-2mm,yshift=1.5mm]
			\def \dl{1.}
			\def \ds{0.45}
			\def \da{-0.0}
			\def \dadv{0.5}
			
			\coordinate (adv) at (\dl, \da);
			
			\fill[fill=my-full-blue!40] (-\dl, -\dl) -- (-\dl, \dl) -- (\dl, \dl) -- (\dl, -\dl) -- cycle;
			\draw [line width=0.5pt, decoration={brace, mirror, raise=1mm}, decorate] (-\dl, -\dl) -- (\dl, -\dl);
			\node[anchor=north] at (0, -0.15 -\dl) {\footnotesize$2 \epsilon$};

			\fill[fill=my-full-purple!70, opacity=1] (\dl-\ds, \da-\ds) -- (\dl-\ds, \da+\ds) -- (\dl+\ds, \da+\ds) -- (\dl+\ds, \da-\ds) -- cycle;
			
			\fill[fill=my-full-green!100, opacity=0.5] (\dl-2*\ds, \da-\ds) -- (\dl-2*\ds, \da+\ds) -- (\dl, \da+\ds) -- (\dl, \da-\ds) -- cycle;
			\draw [line width=0.5pt, decoration={brace, mirror, raise=0.5mm}, decorate] (\dl-2*\ds, \da-\ds) -- (\dl, \da-\ds);
			\node[anchor=north] at (\dl-1*\ds, \da-\ds-0.1) {\footnotesize$2 \tau$};

			\node[point] (p0) at (\dl-3.5*\dadv,\da+1*\dadv) {};
			\node[point] (p1) at (\dl-2.7*\dadv,\da+2*\dadv) {};
			\node[point] (p2) at (\dl-1.7*\dadv,\da+2*\dadv) {};
			\node[point] (p3) at (\dl-0.7*\dadv,\da+\dadv) {};
			\node[point] at (adv) {};
			
			\node[point] (adv2) at (\dl-\ds,\da) {};
			
			\node[point] (O) at (0,0) {};
			\node[anchor=north east] at ($(O)+(0.1,-0.07)$) {\footnotesize$\vx$};
			\node[anchor=north west] at ($(adv)+(-0.1,0.05)$) {\footnotesize$\vx^*$};
			\node[anchor=north east] at ($(adv2)+(0.1,0.05)$) {\footnotesize$\vx'$};

			\draw[-stealth] (p0.center) -- (p1.center);
			\draw[-stealth] (p1.center) -- (p2.center);
			\draw[-stealth] (p2.center) -- (p3.center);
			\draw[-stealth] (p3.center) -- (adv.center);
		\end{scope}
\end{tikzpicture}
}
\vspace{-6mm}
\caption{Illustration of propagation region selection process.}%
\label{fig:region_selection}
\end{wrapfigure}

\paragraph{Selecting the Propagation Region}
\tool aims to find and propagate a small subset of the adversarial input region $\bc{B}^{\epsilon}(\vx)$ that contains the inputs leading to the worst-case loss. 
To this end, we parametrize this propagation region as an $\ell_p$-norm ball $\bc{B}^{\tau}(\vx')$ with centre $\vx'$ and radius $\tau \leq \epsilon - \|\vx - \vx'\|_p$. %
We first choose $\tau = \ratios \epsilon$ by scaling the original perturbation radius $\epsilon$ with the \ratiol $\ratios \in (0,1]$. We then select $\vx'$ as follows: We conduct a PGD attack, choosing the preliminary centre $\vx^*$ as the sample with the highest loss. We then ensure the obtained region is fully contained in the original one by projecting $\vx^*$ onto $\bc{B}^{\epsilon - \tau}(\vx)$ to obtain $\vx'$. We show this in \cref{fig:region_selection}.

\paragraph{Propagation Method}
Having found the propagation region $\bc{B}^{\tau}(\vx')$, we can use any symbolic propagation method to compute an over-approximation of its worst-case loss. We chose \boxd propagation (\diffai \cite{MirmanGV18} or \ibp \citep{GowalIBP2018}) to obtain well-behaved optimization problems \citep{JovanovicBBV21}. There, choosing small propagation regions ($\tau \ll 1$), can significantly reduce the incurred over-approximation errors, as we will show later (see \cref{sec:theory}).

\section{Understanding \tool: Robust Loss and Growth of Small Boxes} \label{sec:theory}
In this section, we aim to uncover the reasons behind \tool's success. Towards this, we first analyse the relationship between robust loss and over-approximation size before investigating the growth of the \boxd approximation with propagation region size.

\paragraph{Robust Loss Analysis} Certified training typically optimizes an over-approximation of the worst-case cross-entropy loss $\bc{L}_\text{CE,rob}$, computed via the softmax of the upper-bound on the logit differences ${\vy}^\Delta := {\vy}-{y}_t$.  When training with the \boxd relaxation and assuming the target class $t=1$, w.l.o.g., we obtain the logit difference ${\vy}^\Delta \in [\bar{\vy}^\Delta - \bs{\delta}^\Delta, \bar{\vy}^\Delta + \bs{\delta}^\Delta]$ and thus the robust cross entropy loss
\begin{equation}\label{eq:rob_loss}
	\bc{L}_\text{CE, rob}(\vx) 
	= \ln\big(1 + \sum_{i=2}^{n} e^{\bar{y}^\Delta_i + \delta^\Delta_i}\big).
\end{equation}
We observe that samples with high ($>\!0$) worst-case misclassification margin $\bar{y}^\Delta + \delta^\Delta := \max_i \bar{y}^\Delta_i + \delta^\Delta_i$ dominate the overall loss and permit the per-sample loss term to be approximated as
\begin{equation}
\max_i \bar{y}^\Delta_i + \delta^\Delta_i =: \bar{y}^\Delta + \delta^\Delta < \bc{L}_\text{CE, rob} < \ln(n) + \max_i \bar{y}^\Delta_i + \delta^\Delta_i.
\end{equation}
Further, we note that the \boxd relaxations of many functions preserve the box centres, i.e., %
$\bar{\vx}^i = \vf(\bar{\vx}^{i-1})$. Only unstable ReLUs, i.e., ReLUs containing $0$ in their input bounds, introduce a slight shift. However, these are empirically few in certifiably trained networks (see \cref{tab:rel_states}).

These observations allow us to decompose the robust loss into an accuracy term $\bar{y}^\Delta$, corresponding to the misclassification margin of the adversarial example $\vx'$ at the centre of the propagation region, and a robustness term ${\delta}^\Delta$, bounding the difference to the actual worst-case loss.
These terms generally represent conflicting objectives, as local robustness requires the network to disregard high frequency features \citep{IlyasSTETM19}.
Therefore, robustness and accuracy are balanced to minimize the optimization objective \cref{eq:rob_loss}. Consequently, reducing the regularization induced by the robustness term will bias the optimization process towards standard accuracy. Next, we investigate how \tool reduces exactly this regularization strength, by propagating smaller regions.

\begin{wrapfigure}[31]{r}{0.34 \textwidth}
	\vspace{-4mm}
	\centering
	\includegraphics[width=1.0\linewidth]{./figures/plot_pre_act}
	\vspace{-6mm}
	\caption{Input distribution for last ReLU layer depending on training method.}%
	\label{fig:pre_act_density}
	\vspace{5mm}
	\includegraphics[width=0.95\linewidth]{./figures/theory_vis}
	\vspace{-2mm}
	\caption{Comparison of the actual (purple) mean output size and a linear growth (orange) around the black $\times$ for a ReLU layer where input box centres $\bar{x}\! \sim\! \bc{N}(\mu\!=\!-1.0,\sigma\!=\!0.5)$.}%
	\label{fig:theory_slopes}
\end{wrapfigure}
\paragraph{\boxd Growth}
To investigate how \boxd approximations grow as they are propagated, let us again consider an $L$-layer network $\vh = \vf_L \circ \bs{\sigma} \circ \vf_{L-2} \circ \dotsc \circ \vf_1$, with linear layers $\vf_i$ and ReLU activation functions $\bs{\sigma}$. Given a \boxd input with radius $\delta^{i-1}$ and centre distribution $\bar{x}^{i-1} \sim \bc{D}$, we now define the per-layer growth rate $\kappa^i$ as the ratio of input and expected output radius:
\begin{equation}
	\kappa^i = \frac{\E_\bc{D}[ \delta^i ]}{\delta^{i-1}}.
\end{equation}
For linear layers with weight matrix $\mW$, we obtain an output radius $\delta^{i} = |\mW| \delta^{i-1}$ and thus a constant growth rate $\kappa^i$, corresponding to the row-wise $\ell_1$ norm of the weight matrix $|\mW_{j,\cdot}|_1$. Empirically, we find most linear and convolutional layers to exhibit growth rates between $10$ and $100$ (see \cref{tab:growth_rates} in \cref{app:growth_rates}).

For ReLU layers $\vx^i = \sigma(\vx^{i-1})$, computing the growth rate is more challenging, as it depends on the location and size of the inputs. \citet{ShiBetterInit2021} assume the input \boxd centres $\bar{\vx}^{i-1}$ to be symmetrically distributed around $0$, i.e., $P_\bc{D}(\bar{x}^{i-1}) = P_\bc{D}(-\bar{x}^{i-1})$, and obtain a constant growth rate of $\kappa^i = 0.5$. While this assumption holds at initialization, we observe that trained networks tend to have more inactive than active ReLUs (see \cref{tab:rel_states}), indicating asymmetric distributions with more negative inputs (see \cref{fig:pre_act_density}).

We now investigate this more realistic setting. We first consider the two limit cases where input radii $\delta^{i-1}$ go against $0$ and $\infty$. 
When input radii are $\delta^{i-1} \approx 0$, active neurons will stay stably active, yielding $\delta^{i} = \delta^{i-1}$ and inactive neurons will stay stably inactive, yielding $\delta^{i}=0$. Thus, we obtain a growth rate, equivalent to the portion of active neurons.
In the other extreme $\delta^{i-1} \rightarrow \infty$, all neurons will become unstable with $\bar{x}^{i-1} \ll \delta^{i-1}$, yielding $\delta^{i} \approx 0.5 \, \delta^{i-1}$, and thus a constant growth rate of $\kappa^i = 0.5$.
To analyze the behavior in between those extremes, we assume pointwise asymmetry favouring negative inputs, i.e., $p(\bar{x}^{i-1} = -z) > p(\bar{x}^{i-1} = z), \; \forall z \in \R^{>0}$. In this setting, we find that output radii grow strictly super-linear in the input size:

\begin{restatable}[Hyper-Box Growth]{thm}{growth}
	\label{thm:growth}
	Let $y := \sigma(x) = \max(0,x)$ be a ReLU function and consider box inputs with radius $\delta_x$ and asymmetrically distributed centres $\bar{x} \sim \bc{D}$ such that $P_\bc{D}(\bar{x} = -z) > P_\bc{D}(\bar{x} = z), \; \forall z\in \R^{>0}$. Then, the mean output radius $\delta_y$ will grow super-linearly in the input radius $\delta_x$. More formally:
	\begin{equation}
	\forall \delta_{x},\delta_{x}' \in \R^{\geq0} \colon \quad \delta_{x}' > \delta_{x} \implies \E_\bc{D} [\delta'_{y}] > \E_\bc{D} [\delta_{y}] + (\delta'_{x} - \delta_{x}) \frac{\partial}{\partial \delta_{x}} \E_\bc{D} [\delta_{y}].
	\end{equation}
\end{restatable}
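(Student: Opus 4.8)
The plan is to show that the map $\phi(\delta_x) := \E_\bc{D}[\delta_y]$ is strictly convex and $C^1$ on $\R^{\geq 0}$; the asserted inequality is then precisely the tangent-line characterization of strict convexity of a differentiable function, read off at the point $\delta_x' > \delta_x$.

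I would start by writing the output radius of a ReLU applied to a single interval in closed form. Feeding $[\bar x - \delta_x,\, \bar x + \delta_x]$ through $\sigma$ yields the bounds $\relu(\bar x \pm \delta_x)$, hence
\begin{equation}
	\delta_y = g(\bar x, \delta_x) := \tfrac12\big((\bar x + \delta_x)_+ - (\bar x - \delta_x)_+\big),
\end{equation}
where $(\cdot)_+ = \max(0,\cdot)$; as a function of $\bar x$ this equals $0$ for $\bar x \le -\delta_x$, grows with slope $\tfrac12$ on $(-\delta_x,\delta_x)$, and saturates at $\delta_x$ for $\bar x \ge \delta_x$. The crucial elementary fact I would extract is the antisymmetric pairing identity
\begin{equation}
	g(z,\delta_x) + g(-z,\delta_x) = \delta_x \qquad \text{for all } z \ge 0,\ \delta_x \ge 0,
\end{equation}
verified by splitting on $\delta_x \le z$ versus $\delta_x > z$, together with the fact that for fixed $z > 0$ the map $\delta_x \mapsto g(-z,\delta_x) = \tfrac12(\delta_x - z)_+$ is a convex hinge (it is $0$ up to $\delta_x = z$, then has slope $\tfrac12$).

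Next I would decompose the expectation. Reading the hypothesis as a statement about the density $p$ of $\bc{D}$ (so $p(-z) > p(z)$ for $z>0$; the continuous regime of \cref{fig:theory_slopes} is the relevant one), set $q(z) := p(-z) - p(z) > 0$ on $\R^{>0}$, split the integral over $\{\bar x > 0\}$ and $\{\bar x < 0\}$, and substitute $p(-z) = p(z) + q(z)$; the pairing identity collapses the $p(z)$-weighted part to $\delta_x\, P_\bc{D}(\bar x > 0)$, leaving
\begin{equation}
	\phi(\delta_x) = \delta_x\, P_\bc{D}(\bar x > 0) \;+\; \tfrac12\int_0^\infty (\delta_x - z)_+\, q(z)\, dz .
\end{equation}
The first term is linear; the second is a nonnegative superposition of convex hinges, so $\phi$ is convex, and differentiating under the integral sign (the difference quotients of $(\delta_x-z)_+$ are bounded by $1$, hence dominated by the integrable $q$) gives $\phi \in C^1$ with $\phi'(\delta_x) = P_\bc{D}(\bar x > 0) + \tfrac12\int_0^{\delta_x} q(z)\,dz$. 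Since $q$ is strictly positive on $\R^{>0}$, this derivative is strictly increasing in $\delta_x$, so $\phi$ is strictly convex; equivalently, when $p$ is continuous one reads off $\phi''(\delta_x) = \tfrac12\big(p(-\delta_x) - p(\delta_x)\big) > 0$ directly.

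Finally, strict convexity together with differentiability yields $\phi(\delta_x') > \phi(\delta_x) + (\delta_x' - \delta_x)\,\phi'(\delta_x)$ for every $\delta_x' \neq \delta_x$, which is exactly the claimed bound with $\delta_x' > \delta_x$. I expect the only real obstacle to be bookkeeping rather than a conceptual leap: justifying the differentiation under the integral, pinning down the correct interpretation of the asymmetry hypothesis (density versus literal atom masses, the latter forcing one to handle an at-most-countable exceptional set), and confirming that ``$q>0$ pointwise'' genuinely makes $\int_0^{\delta_x} q$ strictly increasing. The one step that actually drives the proof — and is easy to overlook — is the pairing identity $g(z,\delta_x)+g(-z,\delta_x)=\delta_x$, which is what lets the concave behaviour of $g$ on the active side cancel against the convex behaviour on the inactive side, leaving the strictly convex remainder weighted by the asymmetry $q$.
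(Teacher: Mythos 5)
Your proof is correct, and it reaches the same destination as the paper's --- strict convexity of $\phi(\delta_x)=\E_\bc{D}[\delta_y]$ plus the tangent-line characterization --- but by a genuinely different mechanism. The paper proves an auxiliary lemma (\cref{thm:growth_lem}) that writes $\E_\bc{D}[\delta_y]$ as an integral with moving limits and differentiates it twice under the integral sign, obtaining the explicit curvature $\tfrac12\bigl(p(-\delta_x)-p(\delta_x)\bigr)$, which the asymmetry hypothesis makes positive; the theorem is then read off from positive second derivative. You instead avoid the second derivative entirely: your pairing identity $g(z,\delta_x)+g(-z,\delta_x)=\delta_x$ collapses the symmetric part of the distribution into an exactly linear term and exhibits the remainder as a superposition $\tfrac12\int_0^\infty(\delta_x-z)_+\,q(z)\,dz$ of convex hinges weighted by the asymmetry $q=p(-\cdot)-p(\cdot)>0$, from which strict convexity follows because $\phi'(\delta_x)=P_\bc{D}(\bar x>0)+\tfrac12\int_0^{\delta_x}q$ is strictly increasing. (Your $\phi'$ agrees with the lemma's formula $\tfrac12 P_\bc{D}[-\delta_x<\bar x<\delta_x]+P_\bc{D}[\bar x>\delta_x]$, as a short rearrangement confirms.) What each approach buys: the paper's direct computation yields the curvature formula itself, which it reuses for the piecewise-uniform example; your decomposition needs less regularity (only that $q$ be integrable and positive, not that the density be smooth enough for $\phi''$ to exist pointwise), makes the role of the asymmetry structurally transparent, and isolates the one nontrivial cancellation --- the linearity of the symmetrized part --- that the paper's calculation performs implicitly. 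Both arguments share the same mild abuse in the hypothesis, reading $P_\bc{D}(\bar x=-z)>P_\bc{D}(\bar x=z)$ as a statement about density values rather than atoms, which you correctly flag.
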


We defer a proof to \cref{app:theory_proof} and illustrate this behaviour in \cref{fig:theory_slopes} for the box centre distribution $\bar{x} \sim \bc{N}(\mu\!=\!-1.0,\sigma\!=\!0.5)$. There, we clearly observe that the actual super-linear growth (purple) outpaces a linear approximation (orange). While even the qualitative behaviour depends on the exact centre distribution and the input box size $\delta_x$, we can solve special cases analytically. For example, a piecewise uniform centre distribution yields quadratic growth on its support (see \cref{app:theory_proof}).

Multiplying all layer-wise growth rates, we obtain the overall growth rate $\kappa \!= \!\prod_{i=1}^{L} \kappa^i$, which is exponential in network depth and super-linear in input radius. When not specifically training with the \boxd relaxation, we empirically observe that the large growth factors of linear layers dominate the shrinking effect of the ReLU layers, leading to a quick exponential growth in network depth.
Further, for both \tool and \ibp trained networks, the super-linear growth in input radius  empirically manifests as exponential behaviour (see \cref{fig:error_plot,fig:error_plot_app}). Using \tool, we thus expect the regularization induced by the robustness term to decrease super-linearly, and empirically even exponentially, with \ratiol $\ratios$, explaining the significantly higher accuracies compared to \ibp.

\section{Evaluation} \label{sec:experiments}
\vspace{-1mm}
In this section, we first compare \tool to existing certified training methods before investigating its behavior in an ablation study.

\vspace{-1mm}
\paragraph{Experimental Setup} We implement \tool in PyTorch \citep{PaszkeGMLBCKLGA19}\footnote{Code released at \url{https://github.com/eth-sri/sabr}} and use \mnbab \citep{FerrariMJV22} for certification. We conduct experiments on \mnist \citep{lecun2010mnist}, \cifar \citep{krizhevsky2009learning}, and \TIN \citep{Le2015TinyIV} for the challenging $\ell_\infty$ perturbations, using the same 7-layer convolutional architecture \cnns as prior work \citep{ShiBetterInit2021} unless indicated otherwise (see \cref{app:eval_detail} for more details). We choose similar training hyper-parameters as prior work \citep{ShiBetterInit2021} and provide more detailed information in \cref{app:eval_detail}.

\begin{wrapfigure}[18]{r}{0.38\textwidth}
	\centering
	\vspace{-7mm}
	\input{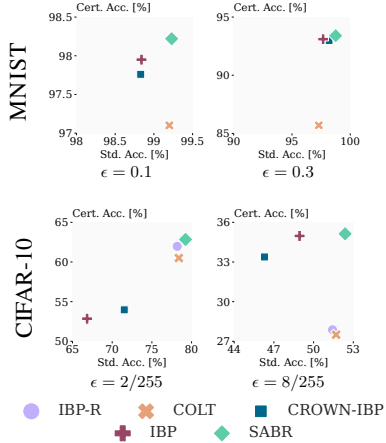}
	\vspace{-7mm}
	\caption{Certified over standard accuracy for different certified training methods. The upper right-hand corner is best.}
	\label{fig:results_scatter}
\end{wrapfigure}

\subsection{Main Results}
\vspace{-1mm}
We compare \tool to state-of-the-art certified training methods in \cref{tab:results} and \cref{fig:results_scatter}, reporting the best results achieved with a given method on \emph{any} architecture.%

In \cref{fig:results_scatter}, we show certified over standard accuracy (upper right-hand corner is best) and observe that \tool (\markersbb) dominates all other methods, achieving both the highest certified and standard accuracy across all settings. As existing methods typically perform well either at large \emph{or} small perturbation radii (see \cref{tab:results,fig:results_scatter}), we believe the high performance of \tool \emph{across perturbation radii} to be particularly promising.

Methods striving to balance accuracy and regularization by bridging the gap between provable and adversarial training (\markercolt, \markeribpr)\citep{BalunovicV20,PalmaIBPR22} perform only slightly worse than \tool at small perturbation radii (\cifar $\epsilon=2/255$), but much worse at large radii, e.g., attaining only $27.5\%$ (\markercolt) and $27.9\%$ (\markeribpr) certifiable accuracy for \cifar $\epsilon \!=\! 8/255$ compared to $35.1\%$ (\markersbb). %
Similarly, methods focusing purely on certified accuracy by directly optimizing over-approximations of the worst-case loss (\markeribp, \markerccibp) \citep{GowalIBP2018,ZhangCXGSLBH20} tend to perform well at large perturbation radii (\mnist $\epsilon\!=\!0.3$ and \cifar $\epsilon\!=\!8/255$), but poorly at small perturbation radii, e.g. on \cifar at $\epsilon\!=\!2/255$, \tool improves natural accuracy to $79.2\%$ (\markersbb) up from $66.8\%$ (\markeribp) and $71.5\%$ (\markerccibp) and even more significantly certified accuracy to $62.8\%$ (\markersbb) up from $52.9\%$ (\markeribp) and $54.0\%$ (\markerccibp). On the particularly challenging \TIN, \tool again dominates all existing certified training methods, improving certified and standard accuracy by almost $3\%$.

To summarize, \tool improves strictly on all existing certified training methods across all commonly used benchmarks with relative improvements exceeding $25\%$ in some cases. 
\begin{table}[t]
	\centering	
	\begin{adjustbox}{width=\columnwidth,center}
		\begin{threeparttable}
			\caption{Comparison of the standard (Acc.) and certified (Cert. Acc.) accuracy for different certified training methods on the full \mnist, \cifar, and \TIN test sets. We use \mnbab \citep{FerrariMJV22} for certification and report other results from the relevant literature.			\vspace{-2mm}}
			\begin{tabular}{lclccc}
				\toprule
				Dataset & $\epsilon_\infty$ & Training Method & Source & Acc. [\%]  & Cert. Acc. [\%] \\
				\midrule
				\multirow{8}*{\mnist}&\multirow{4}*{0.1}&\colt &\citet{BalunovicV20}     & 99.2  & 97.1 \\
				&&\crownibp &\citet{ZhangCXGSLBH20}& 98.83 &  97.76 \\
				&&\ibp &\citet{ShiBetterInit2021}  & 98.84 & 97.95 \\
				&&\tool &this work            &\textbf{99.23} & \textbf{98.22}  \\
				\cmidrule(rl){2-6}
				&\multirow{4}*{0.3}                  &\colt &\citet{BalunovicV20}        & 97.3  & 85.7\\
				&&\crownibp &\citet{ZhangCXGSLBH20}  & 98.18 & 92.98 \\
				&&\ibp &\citet{ShiBetterInit2021}    & 97.67 & 93.10 \\ 
				&&\tool &this work              & \textbf{98.75} & \textbf{93.40} \\
				\cmidrule(rl){1-6}
				\multirow{12}*{\cifar} &\multirow{5}*{2/255} &\colt &\citet{BalunovicV20}                  & 78.4 & 60.5 \\
				&&\crownibp &\citet{ZhangCXGSLBH20}            & 71.52 & 53.97 \\
				&&\ibp &\citet{ShiBetterInit2021}              & 66.84 & 52.85 \\
				&&\ibpr&\citet{PalmaIBPR22}  & 78.19 & 61.97\\
				&&\tool &this work         & \textbf{79.24} & \textbf{62.84}\\
				\cmidrule(rl){2-6}
				&\multirow{6}*{8/255}&\colt  &\citet{BalunovicV20}                        & 51.7 & 27.5 \\
				&&\crownibp &\citet{XuS0WCHKLH20}                    & 46.29 & 33.38\\
				&&\ibp &\citet{ShiBetterInit2021}                     & 48.94 & 34.97 \\ 
				&&\ibpr &\citet{PalmaIBPR22}      &51.43 & 27.87\\
				&&\tool &this work                             & \textbf{52.38} & \textbf{35.13}\\ 
				\cmidrule(rl){1-6}
				\multirow{3}*{\TIN}&\multirow{3}*{1/255}& \crownibp &\citet{ShiBetterInit2021}  & 25.62 & 17.93\\
				&&\ibp &\citet{ShiBetterInit2021} & 25.92 &17.87 \\
				&& \tool &this work & \textbf{28.85} & \textbf{20.46} \\
				\bottomrule
			\end{tabular}
			\label{tab:results}
		\end{threeparttable}
	\end{adjustbox}
	\vspace{-3mm}
\end{table}

\begin{wraptable}[11]{r}{0.52\textwidth}
	\centering
	\vspace{-0mm}
	\begin{minipage}{1.00\linewidth}
	\resizebox{1.00\linewidth}{!}{
	\renewcommand{\arraystretch}{1.0}
		\begin{threeparttable}
			\caption{Comparison of natural (Nat.) and certified (Cert.) accuracy [\%] to \sortnet \citep{ZhangJHW22}.\vspace{-2mm}}
			\label{tab:linf_results}
			\begin{tabular}{@{}lccccc@{}}
				\toprule
				\multirow{2.5}{*}{Dataset} & \multirow{2.5}{*}{$\epsilon$} & \multicolumn{2}{c}{\sortnet} & \multicolumn{2}{c}{\tool (\textbf{ours})} \\
				\cmidrule(rl){3-4}				\cmidrule(rl){5-6}
				&& Nat. & Cert. & Nat. & Cert. \\
				\midrule
				\multirow{2}*{\mnist}& 0.1 & 99.01 & 98.14 & \textbf{99.23} &  \textbf{98.22}\\
				& 0.3 & 98.46 & \textbf{93.40} & \textbf{98.75} &  \textbf{93.40}\\
				\cmidrule(rl){1-6}
				\multirow{2}*{\cifar}& 2/255 & 67.72 & 56.94 & \textbf{79.24} &  \textbf{62.84}\\
				& 8/255 & \textbf{54.84} & \textbf{40.39} & 52.38 &  35.13\\
				\cmidrule(rl){1-6}
				\TIN & 1/255 & 25.69 & 18.18 & \textbf{28.85} &  \textbf{20.46}\\
				\bottomrule
			\end{tabular}
		\end{threeparttable}
	}
	\end{minipage}
\end{wraptable}
In contrast to certified training methods, \citet{ZhangJHW22} propose \sortnet, a generalization of recent architectures \citep{ZhangCLHW21,ZhangJH022,AnilLG19} with inherent $\ell_\infty$-robustness properties. While \sortnet performs well at very high perturbation magnitudes ($\epsilon=8/255$ for \cifar), it is dominated by \tool in all other settings. Further, robustness can only be obtained against one perturbation type at a time.

\subsection{Ablation Studies}
\begin{figure}[t]
	\vspace{2mm}
	\begin{minipage}{1\textwidth}
	\begin{subfigure}[b]{0.29\textwidth}
		\centering
		\includegraphics[width=\textwidth]{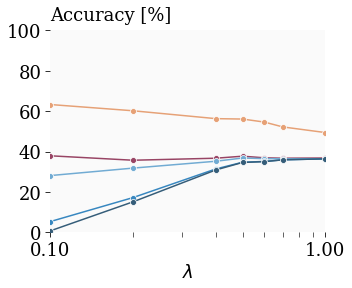}
		\vspace{-5mm}
		\caption{\cnns\\$\epsilon=8/255$}
		\label{fig:ablation_lambda_8_255}
	\end{subfigure}
	\hfill
	\begin{subfigure}[b]{0.29\textwidth}
		\centering
		\includegraphics[width=\textwidth]{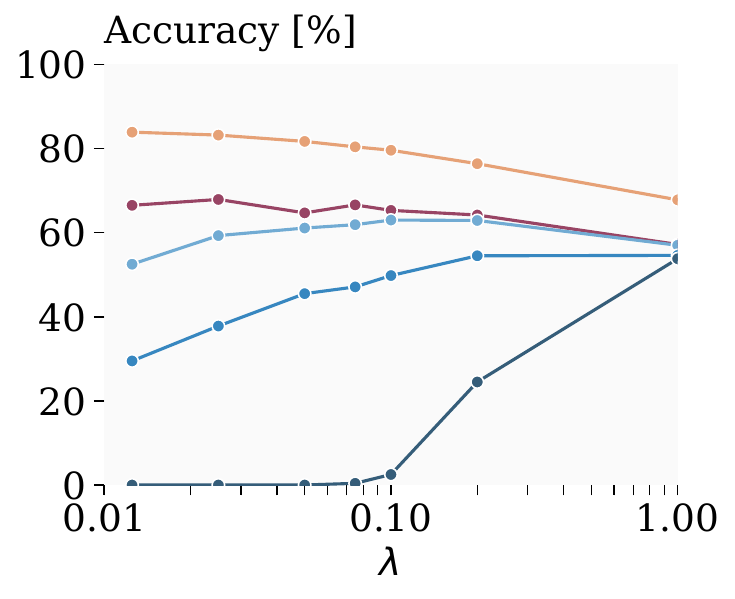}
		\vspace{-5mm}
		\caption{\cnns\\$\epsilon=2/255$}
		\label{fig:ablation_lambda_2_255}
	\end{subfigure}
	\hfill
	\begin{subfigure}[b]{0.408\textwidth}
		\centering
		\includegraphics[width=\textwidth]{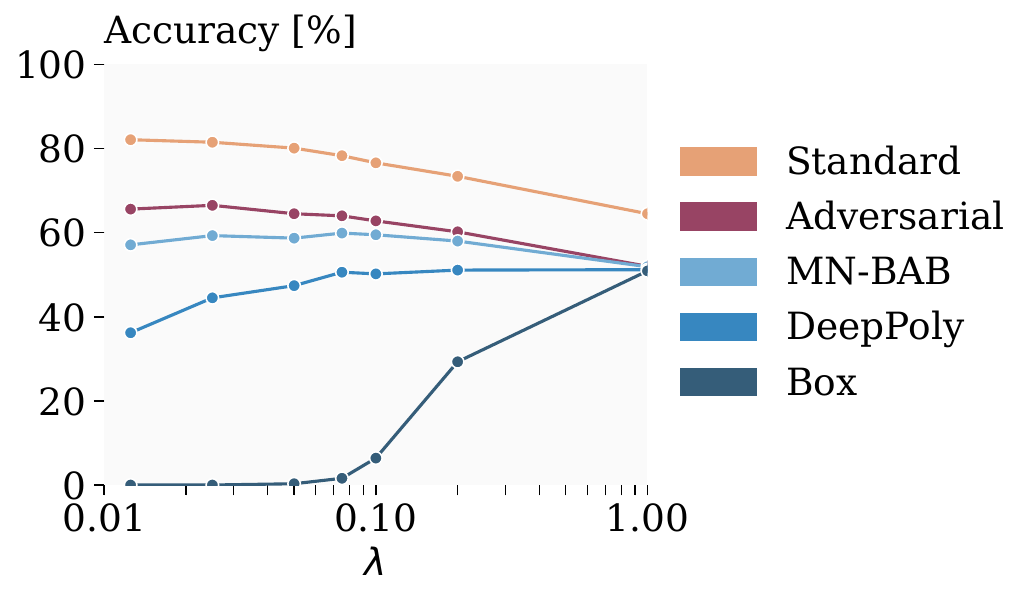}
		\vspace{-5mm}
		\caption{\cnnsn$\qquad\qquad\;\;\;$\\$\;\epsilon=2/255\qquad\qquad\;\;\;$}
		\label{fig:ablation_lambda_narrow}
	\end{subfigure}
	\end{minipage}
	\vspace{-3mm}
	\caption{Standard, adversarial and certified accuracy depending on the certification method (\boxd, \deeppoly, and \mnbab) for the first 1000 test set samples of \cifar.}
	\label{fig:ablation_lambda}
	\vspace{-4mm}
\end{figure}
\paragraph{Certification Method and Propagation Region Size}
To analyze the interaction between the precision of the certification method and the size of the propagation region, we train a range of models with \ratiols $\ratios$ varying from $0.0125$ to $1.0$ and analyze them with verification methods of increasing precision (\boxd, \deeppoly, \mnbab). Further, we compute adversarial accuracies using a 50-step PGD attack \citep{MadryMSTV18} with 5 random restarts and the targeted logit margin loss \citep{Carlini017}. 
We illustrate results in \cref{fig:ablation_lambda} and observe that standard and adversarial accuracies increase with decreasing $\ratios$, as regularization decreases. %
For $\ratios = 1$, i.e., \ibp training, we observe little difference between the verification methods. However, as we decrease $\ratios$, the \boxd verified accuracy decreases quickly, despite \boxd  relaxations being used during training.
In contrast, using the most precise method, \mnbab, we initially observe increasing certified accuracies, as the reduced regularization yields more accurate networks, before the level of regularization becomes insufficient for certification. While \deeppoly loses precision less quickly than \boxd, it can not benefit from more accurate networks. This indicates that the increased accuracy, enabled by the reduced regularization, may rely on complex neuron interactions, only captured by \mnbab.
These trends hold across perturbation magnitudes (\cref{fig:ablation_lambda_2_255,fig:ablation_lambda_8_255}) and become even more pronounced for narrower networks (\cref{fig:ablation_lambda_narrow}), which are more easily over-regularized.

This qualitatively different behavior depending on the precision of the certification method highlights the importance of recent advances in neural network verification for certified training. Even more importantly, these results clearly show that provably robust networks do not necessarily require the level of regularization introduced by \ibp training.

\begin{wrapfigure}[15]{r}{0.44\textwidth}
	\centering
	\vspace{-5mm}
	\includegraphics[width=0.95\linewidth]{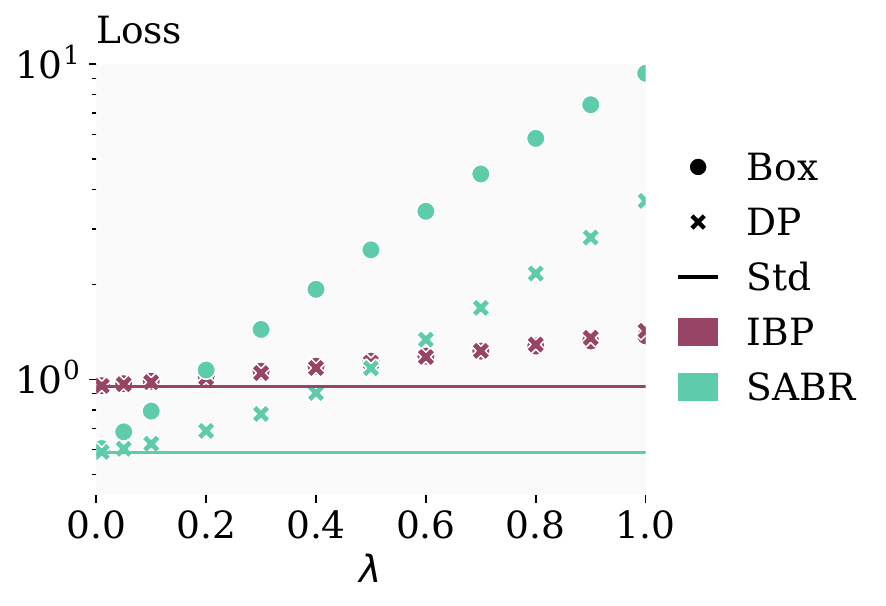}
	\vspace{-3mm}
	\caption{Standard (Std.) and robust cross-entropy loss, computed with \boxd (Box) and \deeppoly (DP) for an \ibp and  \tool trained network over evaluation  \ratiols $\ratios$.}
	\label{fig:error_plot}
\end{wrapfigure}
\paragraph{Loss Analysis}
In \cref{fig:error_plot}, we compare the robust loss of a \tool and an \ibp trained network across different propagation region sizes (all centred around the original sample) depending on the bound propagation method used. We first observe that, when propagating the full input region ($\ratios=1$), the \tool trained network yields a much higher robust loss than the \ibp trained one. However, when comparing the respective training \ratiols, $\ratios=0.05$ for \tool and $\ratios=1.0$ for \ibp, \tool yields significantly smaller training losses. Even more importantly, the difference between robust and standard loss is significantly lower, which, recalling \cref{sec:theory}, directly corresponds to a reduced regularization for robustness and allows the \tool trained network to reach a much lower standard loss. 
Finally, we observe the losses to clearly grow super-linearly with increasing propagation region sizes (note the logarithmic scaling of the y-axis) when using the \boxd relaxation, agreeing well with our theoretical results in \cref{sec:theory}. While the more precise \deeppoly (DP) bounds yield significantly reduced robust losses for the \tool trained network, the \ibp trained network does not benefit at all, again highlighting its over-regularization. See \cref{app:eval_detail} for extended results.

\begin{wraptable}[12]{r}{0.26 \textwidth}    
	\centering
	\vspace{-0.5mm}
	\begin{minipage}{1.00\linewidth}
		\resizebox{1.00\linewidth}{!}{
		\begin{threeparttable}
			\caption{Cosine similarity between $\nabla_\theta \bc{L}_\text{rob}$ for \ibp and \tool and $\nabla_\theta \bc{L}_\text{CE}$ for adversarial (Adv.) and unperturbed (Std.) examples.
				\vspace{-2mm}}%
			\label{tab:cos_sim}
			\centering
			\begin{tabular}{lcc}
				\toprule
				Loss & \ibp & \tool \\ 
				\midrule
				Std.	& $0.5586$ & $\mathbf{0.8071}$ \\	
				Adv. & $0.8047$ & $\mathbf{0.9062}$ \\
				\bottomrule
			\end{tabular}
		\end{threeparttable}
	}
	\end{minipage}
\end{wraptable}
\paragraph{Gradient Alignment}
To analyze whether \tool training is actually more aligned with standard accuracy and empirical robustness, as indicated by our theory in \cref{sec:theory}, we conduct the following experiment for \cifar and $\epsilon=2/255$: We train one network using \tool with $\ratios=0.05$ and one with \ibp, corresponding to $\ratios=1.0$. For both, we now compute the gradients $\nabla_\theta$ of their respective robust training losses $\bc{L}_\text{rob}$ and the cross-entropy loss $\bc{L}_\text{CE}$ applied to unperturbed (Std.) and adversarial (Adv.) samples. We then report the mean cosine similarity between these gradients across the whole test set in \cref{tab:cos_sim}. We clearly observe that the \tool loss is much better aligned with both the cross-entropy loss of unperturbed and adversarial samples, corresponding to standard accuracy and empirical robustness, respectively.

\begin{wraptable}[9]{r}{0.42\textwidth}
	\centering
	\vspace{-4mm}
	\begin{minipage}{1.00\linewidth}
		\resizebox{1.00\linewidth}{!}{
	\begin{threeparttable}
		\caption{Average percentage of active, inactive, and unstable ReLUs for concrete points and boxes depending on training method.	\vspace{-3mm}}
		\label{tab:rel_states}
		\begin{tabular}{lrrrrr}
			\toprule
			& \multicolumn{2}{c}{Point} & \multicolumn{3}{c}{Whole Region} \\ 
			\cmidrule(rl){2-3}  \cmidrule(rl){4-6}
			Method& Act & Inact & Unst & Act & Inact  \\
			\midrule
			IBP & 26.2 & 73.8 & 1.18&25.6&73.2 \\
			\tool & 35.9 & 64.1 &  3.67 & 34.3 & 62.0 \\
			PGD & 36.5& 63.5 & 65.5&15.2&19.3 \\
			\bottomrule
		\end{tabular}
	\end{threeparttable}
	}
\end{minipage}

\end{wraptable}

\paragraph{ReLU Activation States} 
The portion of ReLU activations which are (stably) active, inactive, or unstable has been identified as an important characteristic of certifiably trained networks \citep{ShiBetterInit2021}. We evaluate these metrics for \ibp, \tool, and adversarially (PGD) trained networks on \cifar at $\epsilon=2/255$, using the \boxd relaxation to compute intermediate bounds, and report the average over all layers and test set samples in \cref{tab:rel_states}. We observe that, when evaluated on concrete points, the \tool trained network has around 37\% more active ReLUs than the \ibp trained one and almost as many as the PGD trained one, indicating a significantly smaller level of regularization. While the \tool trained network has around $3$-times as many unstable ReLUs as the \ibp trained network, when evaluated on the whole input region, it has $20$-times fewer than the PGD trained one, highlighting the improved certifiability.

\section{Related Work} \label{sec:related_work}
\vspace*{-2mm}

\paragraph{Verification Methods} Deterministic verification methods analyse a given network by using abstract interpretation \citep{GehrMDTCV18,SinghGMPV18,SinghGPV19}, or translating the verification into an optimization problem which they then solve using linear programming (LP) \citep{PalmaBBTK21,MullerMSPV22,WangZXLJHK21,ZhangWXLLJ22}, mixed integer linear programming (MILP) \citep{TjengXT19,SinghGPV19A}, or semidefinite programming (SDP) \citep{RaghunathanSL18,DathathriDKRUBS20}. However, as neural network verification is generally NP-complete \citep{KatzBDJK17}, many of these methods trade precision for scalability, yielding so-called \emph{incomplete} certification methods, which might fail to prove robustness even when it holds.
In this work, we analyze our \tool trained networks with deterministic methods.

\paragraph{Certified Training} \diffai \citep{MirmanGV18} and \ibp \citep{GowalIBP2018} minimize a sound over-approximation of the worst-case loss computed using the \boxd relaxation. 
\citet{WongSMK18} instead use the \deepz relaxation \citep{SinghGMPV18}, approximated using Cauchy random matrices.
\citet{WongK18} compute worst-case losses by back-substituting linear bounds using fixed relaxations.  \crownibp \citep{ZhangCXGSLBH20} uses a similar back-substitution approach but leverages minimal area relaxations introduced by \citet{ZhangWCHD18} and \citet{SinghGPV19} to bound the worst-case loss while computing intermediate bounds using the less precise but much faster \boxd relaxation.
\citet{ShiBetterInit2021} show that they can obtain the same accuracies with much shorter training schedules by combining \ibp training with a special initialization.
\colt \citep{BalunovicV20} combines propagation using the \deepz relaxation with adversarial search.
\ibpr \citep{PalmaIBPR22} combines adversarial training with much larger perturbation radii and a ReLU-stability regularization based on the \boxd relaxation. 
We compare favorably to all (recent) methods above in our experimental evaluation (see \cref{sec:experiments}). \citet{MullerBV21} combine certifiable and accurate networks to allow for more efficient trade-offs between robustness and accuracy.

The idea of propagating subsets of the adversarial input region has been explored in the settings of adversarial patches \citep{ChiangNAZSG20} and geometric perturbations \citep{BalunovicBSGV19}, where the number of subsets required to cover the whole region is linear or constant in the input dimensionality. However, these methods are not applicable to the $\ell_p$-perturbation setting, we consider, where this scaling is exponential. 

\paragraph{Robustness by Construction}
\citet{LiCWC19}, \citet{LecuyerAG0J19}, and \citet{CohenRK19} construct locally Lipschitz classifiers by introducing randomness into the inference process, allowing them to derive probabilistic robustness guarantees. Extended in a variety of ways \citep{SalmanLRZZBY19,YangDHSR020}, these methods can obtain strong robustness guarantees with high probability \citep{SalmanLRZZBY19} at the cost of significantly (100x) increased runtime during inference. We focus our comparison on deterministic methods.
\citet{ZhangCLHW21} propose a novel architecture, which inherently exhibits $\ell_\infty$-Lipschitzness properties, allowing them to efficiently derive corresponding robustness guarantees. \citet{ZhangJH022} build on this work by improving the challenging training process. Finally, \citet{ZhangJHW22} generalize this concept in \sortnet.

\section{Conclusion} \label{sec:conclusion}
\vspace*{-2mm}
We introduced a novel certified training method called \tool (\tooll) based on the key insight, that propagating small but carefully selected subsets of the input region combines small approximation errors and thus regularization with well-behaved optimization problems. This allows \tool trained networks to outperform \emph{all} existing certified training methods on \emph{all} commonly used benchmarks in terms of \emph{both} standard and certified accuracy.
Even more importantly, \tool lays the foundation for a new class of certified training methods promising to alleviate the robustness-accuracy trade-off and enable the training of networks that are both accurate and certifiably robust.

\section{Ethics Statement}
As \tool improves both certified and standard accuracy compared to existing approaches, it could help make real-world AI systems more robust to both malicious and random interference. Thus any positive and negative societal effects these systems have could be amplified. Further, while we achieve state-of-the-art results on all considered benchmark problems, this does not (necessarily) indicate sufficient robustness for safety-critical real-world applications, but could give practitioners a false sense of security when using \tool trained models.

\section{Reproducibility Statement}
We publish our code, all trained models, and detailed instructions on how to reproduce our results at \url{https://github.com/eth-sri/sabr}, providing an anonymized version to the reviewers. 
Further, we provide proofs for our theoretical contributions in \cref{app:theory_proof} and a detailed description of all hyper-parameter choices as well as a discussion of the used data sets including all preprocessing steps in \cref{app:eval_detail}.

\section*{Acknowledgements}
This work is supported in part by ELSA --- European Lighthouse on Secure and Safe AI funded by the European Union under grant agreement No. 101070617. Views and opinions expressed are however those of the authors only and do not necessarily reflect those of the European Union or European Commission. Neither the European Union nor the European Commission can be held responsible for them.

This work has received funding from the Swiss State Secretariat for Education, Research and Innovation (SERI) (SERI-funded ERC Consolidator Grant).

\message{^^JLASTBODYPAGE \thepage^^J}

\clearpage
\bibliography{references}
\bibliographystyle{iclr2023_conference}

\message{^^JLASTREFERENCESPAGE \thepage^^J}

\ifincludeappendixx
	\clearpage
	\appendix
	\section{Deferred Proofs}\label{app:theory_proof}
In this section, we provide the proof for \cref{thm:growth_lem}. Let us first consider the following Lemma:

\begin{restatable}[Hyper-Box Growth]{lem}{growth_lem}
	\label{thm:growth_lem}
	Let $y := \sigma(x) = \max(0,x)$ be a ReLU function and consider box inputs with radius $\delta_x$ and centres $\bar{x} \sim \bc{D}$. Then the mean radius $\E\delta_y$ of the output boxes will satisfy:
	\begin{equation}
		\frac{\partial}{\partial\delta_{x,i}} \E_\bc{D}[\delta_{y,i}] 
		= \frac{1}{2} P_\bc{D}[- \delta_{x,i} < \bar{x}_i < \delta_{x,i}] + P_\bc{D}[\bar{x}_i > \delta_{x,i}] > 0,
	\end{equation}
	and
	\begin{equation}
		\frac{\partial}{\partial^2\delta_{x,i}} \E_\bc{D}[\delta_{y,i}] =\frac{1}{2} (P_\bc{D}[\bar{x}_i = -\delta_{x,i}] - P_\bc{D}[\bar{x}_i = \delta_{x,i}]). 
		\label{eqn:growth_curvature}
	\end{equation}
\end{restatable}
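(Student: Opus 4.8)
\emph{Proof plan.} The plan is to reduce to a single coordinate, write the post-ReLU box radius as an explicit elementary function of the input centre and radius, and then differentiate under the expectation, being careful about the non-smooth points.

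First I would note that ReLU acts coordinate-wise and is monotone increasing, so it sends an input interval $[\bar x_i - \delta_{x,i},\, \bar x_i + \delta_{x,i}]$ to $[\sigma(\bar x_i - \delta_{x,i}),\, \sigma(\bar x_i + \delta_{x,i})]$; hence the $i$-th output radius depends only on the $i$-th input box, and we may drop the index and argue about a scalar ReLU. This yields the closed form
\begin{equation}
  \delta_y \;=\; \tfrac12\big(\sigma(\bar x + \delta_x) - \sigma(\bar x - \delta_x)\big)
  \;=\;
  \begin{cases}
    \delta_x, & \bar x \ge \delta_x,\\
    \tfrac12(\bar x + \delta_x), & -\delta_x \le \bar x \le \delta_x,\\
    0, & \bar x \le -\delta_x,
  \end{cases}
\end{equation}
which is continuous in $\bar x$ (the branches agree at $\bar x = \pm\delta_x$), so no ambiguity arises from how the boundary cases are assigned.

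Next I would differentiate $\E_\bc{D}[\delta_y]$ with respect to $\delta_x$. As a function of $\delta_x$, $\delta_y$ is a difference of two $\tfrac12$-Lipschitz maps, hence $1$-Lipschitz uniformly in $\bar x$, with $0\le\delta_y\le\delta_x$; differentiation under the integral sign is therefore justified by dominated convergence. For every $\bar x \notin \{-\delta_x,\delta_x\}$ the point $\delta_x$ lies in the interior of one of the three regimes, so $\partial_{\delta_x}\delta_y = \mathbf 1[\bar x > \delta_x] + \tfrac12\mathbf 1[-\delta_x < \bar x < \delta_x]$, and taking expectations gives the claimed first derivative $P_\bc{D}[\bar x > \delta_x] + \tfrac12 P_\bc{D}[-\delta_x < \bar x < \delta_x]$. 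Both summands are non-negative, and the sum is strictly positive unless $\bc{D}$ is supported entirely on $(-\infty,-\delta_x]$ — the degenerate case where the whole input box maps to $\{0\}$ — which we exclude. For the curvature I would differentiate $g(\delta_x) := P_\bc{D}[\bar x > \delta_x] + \tfrac12 P_\bc{D}[-\delta_x < \bar x < \delta_x]$ once more: enlarging $\delta_x$ removes the mass at $\delta_x$ from the right tail and adds the mass at both $\delta_x$ and $-\delta_x$ to the central interval, so $\partial_{\delta_x}P_\bc{D}[\bar x>\delta_x] = -p(\delta_x)$ and $\partial_{\delta_x}P_\bc{D}[-\delta_x<\bar x<\delta_x] = p(\delta_x) + p(-\delta_x)$, where $p$ is the density (or, in general, the point mass $p(a)=P_\bc{D}[\bar x=a]$). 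Combining, $\partial^2_{\delta_x}\E_\bc{D}[\delta_y] = -p(\delta_x) + \tfrac12(p(\delta_x)+p(-\delta_x)) = \tfrac12(p(-\delta_x)-p(\delta_x))$, which is \cref{eqn:growth_curvature}.

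The hard part is purely the handling of the non-smooth points: $\delta_y$ is only piecewise linear in $\delta_x$, so $\E_\bc{D}[\delta_y]$ genuinely has kinks wherever $\bc{D}$ has an atom. Consequently the first-derivative identity should be read at $\delta_x$ with no atom of $\bc{D}$ at $\pm\delta_x$ (the atoms form a countable set, which is immaterial for the downstream use), equivalently as a one-sided derivative; and the second-derivative identity is to be understood in the Lebesgue–Stieltjes sense, i.e.\ $\partial^2_{\delta_x}\E_\bc{D}[\delta_y]$ is the size of the jump in slope at $\delta_x$. The exchange of $\partial_{\delta_x}$ with $\E_\bc{D}$ must be supported by the uniform Lipschitz bound rather than performed naively. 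Once this is set up, everything else is the elementary bookkeeping above, and the sign in \cref{eqn:growth_curvature} combined with the asymmetry hypothesis is exactly what the ensuing Theorem needs to conclude convexity (hence super-linear growth) of $\delta_x \mapsto \E_\bc{D}[\delta_y]$.
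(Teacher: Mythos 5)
Your proof is correct and follows essentially the same route as the paper's: both reduce to the scalar piecewise-linear expression for $\delta_y$ and differentiate the resulting expectation, the only difference being that you differentiate pointwise under the integral (justified by the uniform Lipschitz bound) whereas the paper applies the Leibniz rule to the explicit integral and lets the boundary terms cancel. Your added care about atoms, the reading of $P_\bc{D}[\bar x = a]$ as a density, and the degenerate case in which strict positivity of the first derivative fails goes beyond what the paper records but does not change the argument.
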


\begin{proof}
	Recall that given an input box with centre $\bar{\vx}$ and radius $\delta_{\vx}$, the output relaxation of a ReLU layer is defined by:
	\begin{equation}
	\bar{y}_i = \begin{cases} 
	0, \quad &\text{if } \bar{x}_i + \delta_{x,i} \leq 0\\
	\frac{\bar{x}_i + \delta_{x,i}}{2}, \quad &\text{elif } \bar{x}_i - \delta_{x,i} \leq 0\\
	\bar{x}_i, \quad &\text{else}
	\end{cases}
	, \qquad
	\delta_{y,i} = \begin{cases} 
	0, \quad &\text{if } \bar{x}_i + \delta_{x,i} \leq 0\\
	\frac{\bar{x}_i + \delta_{x,i}}{2}, \quad &\text{elif } \bar{x}_i - \delta_{x,i} \leq 0\\
	\delta_{x,i}, \quad &\text{else}
	\end{cases}
	\end{equation}
	We thus obtain the expectation
	\begin{align}
	\E_\bc{D}[\delta_{y,i}] &= \int_{-\delta_{x,i}}^{\delta_{x,i}} \frac{\bar{x}_i + \delta_{x,i}}{2} p[\bar{x}_i] d \bar{x}_i + \int_{\delta_{x,i}}^{\infty} \delta_{x,i} p_D(\bar{x}_i) d \bar{x}_i\nonumber \\
	&=	\frac{\delta_{x,i}}{2} P_\bc{D}[- \delta_{x,i} < \bar{x}_i < \delta_{x,i}] + \delta_{x,i} P_\bc{D}[\bar{x}_i > \delta_{x,i}] + \int_{-\delta_{x,i}}^{\delta_{x,i}} \frac{\bar{x}_i}{2} p[\bar{x}_i] d \bar{x}_i,
	\end{align}
	its derivative
	\begin{align}
	\frac{\partial}{\partial\delta_{x,i}} \E_\bc{D}[\delta_{y,i}] 
	=& \frac{1}{2} P_\bc{D}[- \delta_{x,i} < \bar{x}_i < \delta_{x,i}] + \frac{\delta_{x,i}}{2} (P_\bc{D}[\bar{x}_i = -\delta_{x,i}] + P_\bc{D}[\bar{x}_i = \delta_{x,i}]) \nonumber \\
	&+ P_\bc{D}[\bar{x}_i > \delta_{x,i}] - \delta_{x,i} P_\bc{D}[\bar{x}_i = \delta_{x,i}] \nonumber \\
	&+ \frac{\delta_{x,i}}{2} (P_\bc{D}[\bar{x}_i = -\delta_{x,i}] - P_\bc{D}[\bar{x}_i = \delta_{x,i}]) \nonumber \\
	=& \frac{1}{2} P_\bc{D}[- \delta_{x,i} < \bar{x}_i < \delta_{x,i}] + P_\bc{D}[\bar{x}_i > \delta_{x,i}] > 0,
	\end{align}
	and its curvature
	\begin{align}
	\frac{\partial}{\partial^2\delta_{x,i}} \E_\bc{D}[\delta_{y,i}] 
	&= \frac{1}{2} (P_\bc{D}[\bar{x}_i = -\delta_{x,i}] + P_\bc{D}[\bar{x}_i = \delta_{x,i}]) - P_\bc{D}[\bar{x}_i = \delta_{x,i}] \nonumber \\
	&=\frac{1}{2} (P_\bc{D}[\bar{x}_i = -\delta_{x,i}] - P_\bc{D}[\bar{x}_i = \delta_{x,i}]).
	\end{align}
\end{proof}

Now, we can easily proof \cref{thm:growth}, restated below for convenience.

\growth*

\begin{proof}
	We apply \cref{thm:growth_lem} by substituting an asymmetric centre distribution $\bc{D}$, satisfying $P_\bc{D}(\bar{x} = -z) > P_\bc{D}(\bar{x} = z), \; \forall z\in \R^{>0}$ into \cref{eqn:growth_curvature} to obtain:
	\begin{equation*}
		\frac{\partial}{\partial^2\delta_{x,i}} \E_\bc{D}[\delta_{y,i}] = \frac{1}{2} (P_\bc{D}[\bar{x}_i = -\delta_{x,i}] - P_\bc{D}[\bar{x}_i = \delta_{x,i}])>0.
	\end{equation*}
	The theorem follows trivially from the strictly positive curvature.
\end{proof}

\paragraph{Example for Piecewise Uniform Distribution}
Let us assume the centres $\bar{x}\sim D$ are distributed according to:
\begin{equation}
	P_\bc{D}[\bar{x} = z] = \begin{cases}
	a, \quad &\text{if } -l \leq z < 0\\
	b, \quad &\text{elif } 0 < u \leq l\\
	0, \quad &\text{else}
	\end{cases},
	\quad
	l = \frac{1}{a+b},
\end{equation}
where $a$ and $b$. Then we have by \cref{thm:growth_lem}
	\begin{align}
		\E_\bc{D}[\delta_{y}] 
		&=	\frac{\delta_{x}}{2} P_\bc{D}[- \delta_{x} < \bar{x} < \delta_{x}] + \delta_{x} P_\bc{D}[\bar{x} > \delta_{x}] + \int_{-\delta_{x}}^{\delta_{x}} \frac{\bar{x}}{2} p[\bar{x}] d \bar{x}\\
		&= \frac{\delta^2_x}{2} (a+b) +  b \delta_x (l-\delta_x) + \frac{\delta_x^2}{4} (b-a)\\
		&= \delta^2_x \frac{a - b}{4}  +  \delta_x \frac{b}{a+b}.
	\end{align}
We observe quadratic growth for $a>b$ and recover the symmetric special case of $\E_\bc{D}[\delta_{y}] = 0.5 \delta_x$ for $a=b$.

\section{Additional Theoretical Details}
\subsection{Cross-Entropy Loss Formulation}\label{app:ce_deviation}
Below we derive the formulation of the Cross-Entropy (CE) loss used in equation \cref{eq:std_train,eq:rob_loss}. We let $p_i$ be the label probability of class $i$, $q_i$ the predicted probability of class $i$, $t$ the label for sample $\vx$ and $\vy$ the logits predicted by a neural network $\vh$ for this sample.

\begin{align*}
\mathcal{L}_\text{CE} &= -\sum_{i=1}^n p_i(y) \log(q_i(\vy))\\
&= -\sum_{i=1}^n \mathbf{1}_{i=y} \log\left(\frac{\exp(y_i)}{\sum_{j=1}^n \exp(y_j)}\right)\\
&= -\log\left(\frac{\exp(y_t)}{\sum_{j=1}^n \exp(y_j)}\right)\\
&= -\log\left(\frac{\exp(y_t)/\exp(y_t)}{\sum_{j=1}^n \exp(y_j)/\exp(y_t)}\right)\\
&= -\log\left(\frac{1}{\sum_{j=1}^n \exp(y_j-y_t)}\right)\\
&= \log\left(\sum_{j=1}^n \exp(y_j-y_t)\right) - \log(1)\\
&= \log\left(1 + \sum_{\substack{j=1\\ j \neq y}}^n \exp(y_j-y_t)\right)
\end{align*}
\section{Additional Experimental Details} \label{app:eval_detail}
In this section, we provide detailed informations on the exact experimental setup.

\paragraph{Datasets} We conduct experiments on the \mnist \citep{lecun2010mnist}, \cifar \citep{krizhevsky2009learning}, and \TIN \citep{Le2015TinyIV} datasets. For \TIN and \cifar we follow \citet{ShiBetterInit2021} and use random horizontal flips and random cropping as data augmentation during training and normalize inputs after applying perturbations. Following prior work \citep{XuS0WCHKLH20,ShiBetterInit2021}, we evaluate  \cifar and \mnist on their test sets and \TIN on its validation set, as test set labels are unavailable. Following \citet{XuS0WCHKLH20} and in contrast to \citet{ShiBetterInit2021}, we train and evaluate \TIN with images cropped to $56\times56$.

\begin{wraptable}[10]{r}{0.35\textwidth}
	\centering
	\vspace{-3mm}
	\begin{minipage}{1.00\linewidth}
		\resizebox{1.00\linewidth}{!}{
	\begin{threeparttable}
		\centering
		\caption{Hyperparameters for the experiments shown in \cref{tab:results}.}
		\label{tab:params}
		\begin{tabular}{lccc}
			\toprule
			Dataset & $\epsilon$ &$\ell_1$& $\lambda$\\ 
			\midrule
			\multirow{2}*{\mnist} & 0.1 &$10^{-5}$ & 0.4\\
			& 0.3&$10^{-6}$& 0.6 \\
			\midrule
			\multirow{2}*{\cifar} & 2/255 &$10^{-6}$& 0.1 \\
			& 8/255 & 0 & 0.7  \\
			\midrule
			\TIN & 1/255 &$10^{-6}$& 0.4 \\
			\bottomrule
		\end{tabular}
	\end{threeparttable}
	}
\end{minipage}
\end{wraptable}
\paragraph{Training Hyperparameters} We mostly follow the hyperparameter choices from \citet{ShiBetterInit2021} including their  weight initialization and warm-up regularization\footnote{For the ReLU warm-up
regularization, the bounds of the small boxes are considered.}, and use ADAM \citep{KingmaB14} with an initial learning rate of $5\times 10^{-4}$, decayed twice with a factor of 0.2. 
For \cifar we train $160$ an $180$ epochs for $\epsilon=2/255$ and $\epsilon=8/255$, respectively, decaying the learning rate after $120$ and $140$ and $140$ and $160$ epochs. For \TIN $\epsilon=1/255$ we use the same settings as for \cifar at $\epsilon=8/255$. For \mnist we train $70$ epochs, decaying the learning rate after $50$ and $60$ epochs. 
We choose a batch size of $128$ for \cifar and \TIN, and $256$ for \mnist.
We use $\ell_1$ regularization with factors according to \cref{tab:params}. 
For all datasets, we perform one epoch of standard training ($\epsilon=0$) before annealing $\epsilon$ from $0$ to its final value over $80$ epochs for \cifar and \TIN and for $20$ epochs for \mnist. 
We use an $n=8$ step PGD attack with an initial step size of $\alpha = 0.5$, decayed with a factor of $0.1$ after the $4$\th and $7$\th step to select the centre of the propagation region. We use a constant \ratiol $\ratios$ with values shown in \cref{tab:params}. For \cifar $\epsilon=2/255$ we use shrinking with $c_s=0.8$ (see below).

\paragraph{ReLU-Transformer with Shrinking}  
Additionally to standard \tool, outlined in \cref{sec:method}, we propose to amplify the \boxd growth rate reduction (see $\cref{sec:theory}$) affected by smaller propagation regions, by adapting the ReLU transformer as follows:
\begin{equation}
\bar{y}_i = \begin{cases} 
0, \quad &\text{if } \bar{x}_i + \delta_{x,i} \leq 0\\
c_s\,\frac{\bar{x}_i + \delta_{x,i}}{2}, \quad &\text{elif } \bar{x}_i - \delta_{x,i} \leq 0\\
\bar{x}_i, \quad &\text{else}
\end{cases}
, \qquad
\delta_{y,i} = \begin{cases} 
0, \quad &\text{if } \bar{x}_i + \delta_{x,i} \leq 0\\
c_s\,\frac{\bar{x}_i + \delta_{x,i}}{2}, \quad &\text{elif } \bar{x}_i - \delta_{x,i} \leq 0\\
\delta_{x,i}, \quad &\text{else}
\end{cases}
.
\end{equation}
We call $c_s$ the shrinking coefficient, as the output radius of unstable ReLUs is shrunken by multiplying it with this factor. We note that we only use these transformers for the \cifar $\epsilon = 2/255$ network discussed in \cref{tab:results}.

\paragraph{Architectures}
Similar to prior work \citep{ShiBetterInit2021}, we consider a 7-layer convolutional architecture, \cnns. The first 5 layers are convolutional layers with filter sizes [64, 64, 128, 128, 128], kernel size 3, strides [1, 1, 2, 1, 1], and padding 1. They are followed by a fully connected layer with 512 hidden units and the final classification. All but the last layers are followed by batch normalization \citep{IoffeS15} and ReLU activations. For the BN layers, we train using the statistics of the unperturbed data similar to \citet{ShiBetterInit2021}. %
During the PGD attack we use the BN layers in evaluation mode. 
We further consider narrower version, \cnnsn which is identical to \cnns expect for using the filter sizes [32, 32, 64, 64, 64] and a fully connected layer with 216 hidden units.

\begin{wraptable}[9]{r}{0.35\textwidth}
	\centering
	\vspace{-5mm}
	\begin{minipage}{1.00\linewidth}
		\resizebox{1.00\linewidth}{!}{
	\begin{threeparttable}
		\renewcommand{\arraystretch}{0.95}
		\caption{\tool training times on a single NVIDIA RTX 2080Ti.
				\vspace{-2mm}}
		\label{tab:time_train}
		\begin{tabular}{lcc}
			\toprule
			Dataset & $\epsilon$ &Time\\ 
			\midrule
			\multirow{2}*{\mnist} & 0.1 & 3h 23 min\\ %
			& 0.3 & 3h 20 min\\ %
			\cmidrule(rl){1-1}
			\multirow{2}*{\cifar} & 2/255 & 7h 6 min\\
			& 8/255 & 7h 20 min\\ %
			\cmidrule(rl){1-1}
			\TIN & 1/255 & 57h 24 min \\
			\bottomrule
		\end{tabular}
	\end{threeparttable}
	}
\end{minipage}
\end{wraptable}
\paragraph{Hardware and Timings}
We train and certify all networks using single NVIDIA RTX 2080Ti, 3090, Titan RTX, or A6000. Training takes roughly $3$ and $7$ hours for \mnist and \cifar, respectively, with \TIN taking two and a half days on a single NVIDIA RTX 2080Ti. For more Details see \cref{tab:time_train}. Verification with \mnbab takes around $34$h for \mnist, $28$h for \cifar and $2$h for \TIN on a NVIDIA Titan RTX.

\pagebreak
\section{Additional Experimental Results}

\begin{figure}[t]
	\centering
	\begin{subfigure}[t]{0.45\textwidth}
		\centering
		\includegraphics[width=0.95\textwidth]{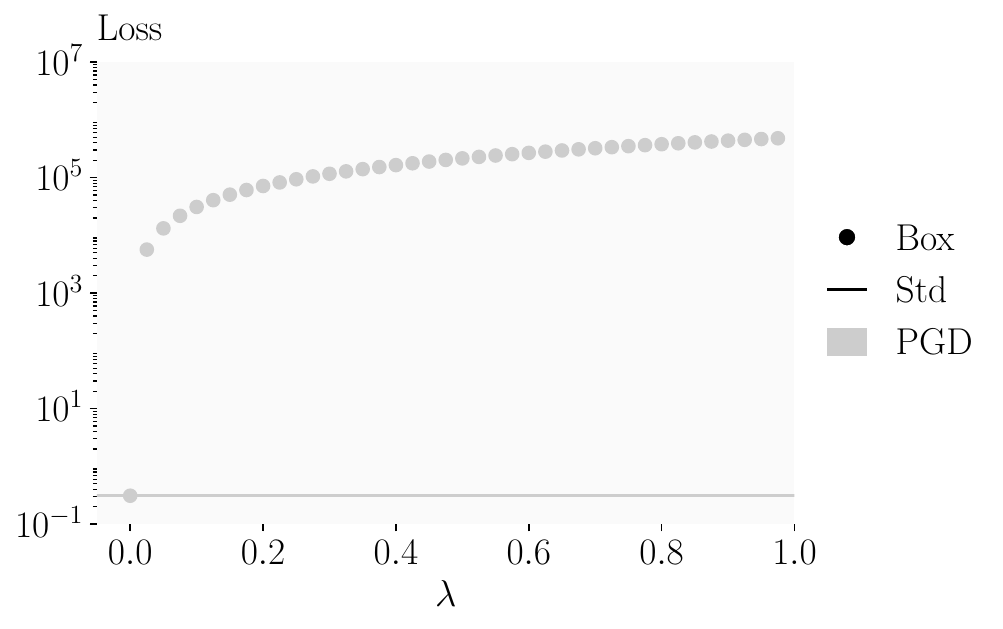}
	\end{subfigure}
	\hfil
	\begin{subfigure}[t]{0.45\textwidth}
		\centering
		\includegraphics[width=0.95\textwidth]{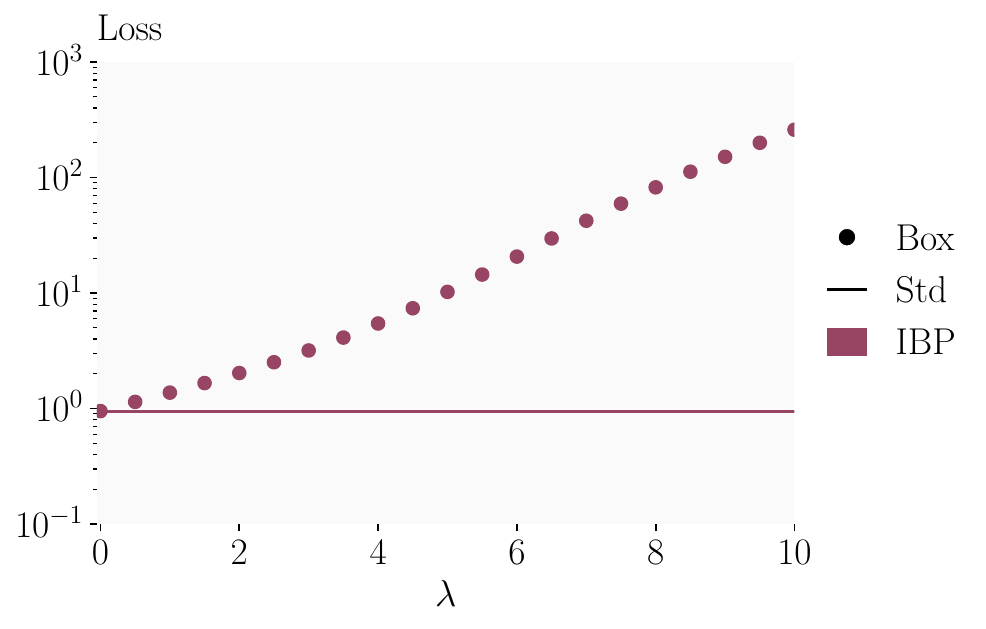}
	\end{subfigure}
	\caption{Standard (Std.) and robust cross-entropy loss, computed with \boxd (Box) bounds for an adversarially (left) and  \ibp (right) trained network over \ratiol $\ratios$. Note the logarithmic y-scale and different axes.}
	\label{fig:error_plot_app}
\end{figure}

\begin{wrapfigure}[15]{r}{0.49\textwidth}
	\centering
	\vspace{-4mm}
	\includegraphics[width=0.99\linewidth]{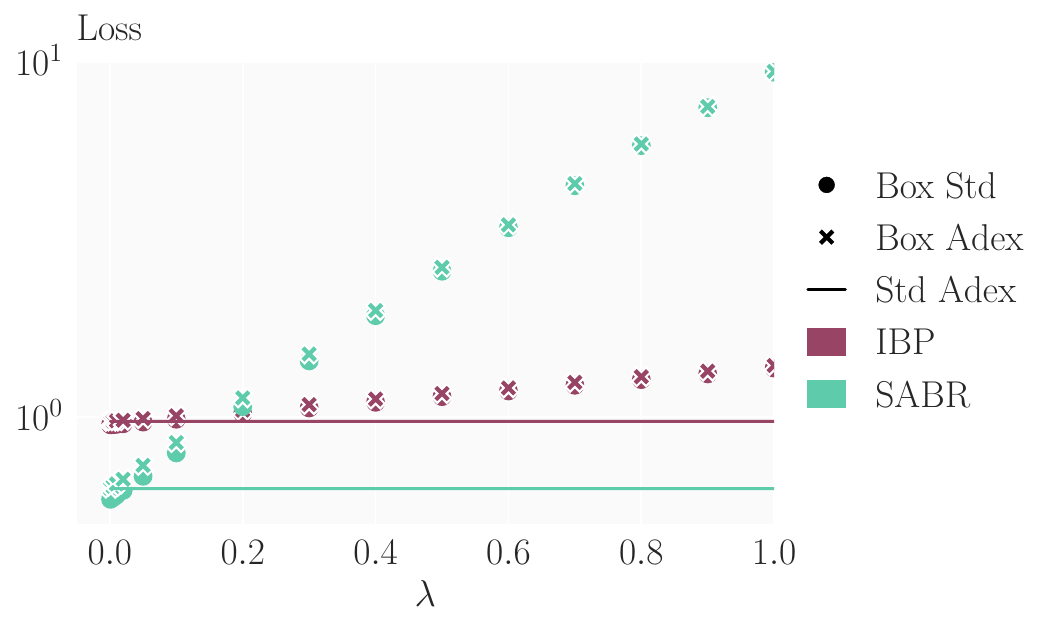}
	\vspace{-7mm}
	\caption{Comparison of the robust cross-entropy losses computed with \boxd (Box) centered around unperturbed and  adversarial examples for an \ibp and  \tool trained network over \ratiol $\ratios$.}
	\label{fig:error_plot_app_adex}
\end{wrapfigure}
\paragraph{Loss Analysis}
In \cref{fig:error_plot_app}, we show the error growth of an adversarially trained (left) and \ibp trained model over increasing \ratiols $\ratios$. We observe that errors grow only slightly super-linear rather than exponential for the adversarially trained network. We trace this back to the large portion of crossing ReLUs (\cref{tab:rel_states}), especially in later layers, leading to the layer-wise growth being only linear. For the \ibp trained model, in contrast, we observe exponential growth across a wide range of propagation region sizes, as the heavy regularization leads to a small portion of active and unstable ReLUs. In \cref{fig:error_plot_app_adex}, we compare errors for \boxd centred around the unperturbed sample (\boxd Std) and around a high loss point computed with an adversarial attack (\boxd Adex). We observe that while the loss is larger around the adversarial centres, especially for small propagation regions, this effect is small compared to the difference between training or certification methods.

\begin{table}[t]
	\centering	
	\begin{adjustbox}{width=\columnwidth,center}
		\begin{threeparttable}
			\caption{Comparison of the standard (Acc.), adversarial (Adv. Acc), and certified (Cert. Acc.) accuracy for different certified training methods on the full \cifar test set. We use \mnbab \citep{FerrariMJV22} to compute all certified and adversarial accuracies.
				\vspace{-2mm}}
			\begin{tabular}{clcccc}
				\toprule
				$\epsilon_\infty$ & Training Method & Source & Acc. [\%] & Adv. Acc. [\%]  & Cert. Acc. [\%] \\
				\midrule
				\multirow{4}*{2/255} &\colt &\citet{BalunovicV20}   &78.42  &	\textbf{66.17} &	61.02 \\
				&\crownibp &\citet{ZhangCXGSLBH20}$^\dagger$                  & 71.27 & 59.58 & 58.19 \\
				&\ibp &\citet{ShiBetterInit2021}                    & - & - & - \\
				&\tool &this work         & \textbf{79.52} & 65.76 & \textbf{62.57}\\
				\cmidrule(rl){1-6}
				\multirow{4}*{8/255}&\colt  &\citet{BalunovicV20}   & 51.69 & 31.81 & 27.60 \\
				&\crownibp & \citet{ZhangCXGSLBH20}$^\dagger$                 & 45.41 & 33.33 & 33.18\\
				&\ibp &\citet{ShiBetterInit2021}                    & 48.94 & 35.43 & \textbf{35.30} \\ 
				&\tool &this work                                   & \textbf{52.00} & \textbf{35.70} & 35.25\\ 
				\bottomrule
			\end{tabular}
			\begin{tablenotes}
				\footnotesize
				\item[] -~$\:\,$No network published.
				\item[] $\dagger$~~Published network does not match reported performance.
			\end{tablenotes}
			\label{tab:mn_bab_results}
		\end{threeparttable}
	\end{adjustbox}
	\vspace{-3mm}
\end{table}

\subsection{Effect of Verification Method on Other Certified Defenses}
In this section we compare different certified defenses when evaluated using the same, precise verifier \mnbab \citep{FerrariMJV22}.
While \colt \citep{BalunovicV20} and \ibpr \citep{PalmaIBPR22} trained networks were verified using similarly expensive and precise verification methods as \mnbab (\milp \citep{TjengXT19} and \bcrown \citep{WangZXLJHK21}, respectively), the \ibp and \crownibp trained networks were originally verified using much less precise \boxd propagation. We compare standard (Acc.), empirical adversarial (Adv. Acc.), and certified (Cert. Acc.) accuracy for \cifar at $\epsilon=2/255$ and $\epsilon=8/255$ in \cref{tab:mn_bab_results}. We omit \ibpr, as neither code nor networks are published. For \crownibp \citep{ZhangCXGSLBH20}, we evaluated both the `best' and last checkpoints of the published networks but observed that the standard accuracy of neither matched the ones reported in the paper. We report the better of the two (`best').

We observe that certified accuracies increase only minimal in most settings, with the exception of \crownibp at $\epsilon=2/255$, where the certified accuracy rises from $54.0\%$ to $58.2\%$. However, there the adversarial accuracy of $59.6\%$ remains significantly below our certified accuracy of $62.6\%$. At $\epsilon=8/255$ the \ibp trained network achieves $35.3\%$ certified accuracy, matching \tool's performance, however at a $3\%$ lower standard accuracy.

\begin{wraptable}[11]{r}{0.46\textwidth}
	\centering
	\vspace{-4.5mm}
	\begin{minipage}{1.0\linewidth}
	\resizebox{1\linewidth}{!}{
	\begin{threeparttable}
		\caption{Ablation of \tool's components with respect to standard (Acc.) and certified (Cert. Acc.) accuracy on the first 1000 samples of the \cifar test set at $\epsilon=2/255$.
			\vspace{-2mm}}
		\begin{tabular}{lcc}
			\toprule
			Training Method & Acc. [\%] & Cert. Acc. [\%] \\
			\midrule
			\tool             & 80.4 & 61.0 \\
			$\quad +$ centred           & 82.5 & 27.4 \\
			$\quad +$ random            & 84.1 & 28.3 \\
			$\quad +$ \fgsm             & 80.8 & 58.2 \\
			$\quad +$ no projection     & 80.0 & 61.9 \\
			$\quad +$ \crownibp         & 80.3 & 56.7 \\
			\bottomrule
		\end{tabular}
		\label{tab:SABR_ablation}
	\end{threeparttable}
	}
	\end{minipage}
\end{wraptable}
\subsection{Ablation \tool}
To assess the different components of \tool, we conduct an ablation study on \cifar with $\epsilon = 2/255$. Beyond the \ratiol $\ratio$, discussed in \cref{sec:experiments} and especially \cref{fig:ablation_lambda}, \tool has two main components: i) the choice of propagation region position and ii) the propagation method.

To analyse the effect of the propagation region's position, we evaluate four methods to compute its center $\vx'$ for $\lambda=0.05$: i) always choose the original input as center (centred: $\vx' = \vx$), ii) choose the center uniformly at random such that the propagation region lies in the original adversarial region (random: $\vx' \sim \bc{U}(\bc{B}^{\epsilon-\tau}(\vx))$), iii) choose the centre with a weak adversarial attack such as \fgsm \citep{GoodfellowSS14} (\fgsm: $\vx' = \vx + (\epsilon-\tau) \sign(\nabla_{\vx} \bc{L}_\text{CE}(\vh_{\bs{\theta}}(\vx),t))$), and iv) choose the centre with a strong adversarial attack over the whole input region $\bc{B}^\epsilon$, without projecting it into $\bc{B}^{\epsilon-\tau}$, allowing it to `stick out' (no projection: $\vx' = \vx^*$).
We observe that choosing centres either at random or as the original input leads to weaker regularization, increasing standard accuracy slightly ($+3.5\%$ and $+2.1\%$, respectively) but significantly reducing certified accuracy ($-32\%$ and $-33\%$, respectively). Using a weaker adversarial attack has a similar but much less pronounced effect, increasing natural accuracy by $0.4\%$ at the cost of a $3.2\%$ reduction in certified accuracy. Permitting propagation regions to protrude from the original input region increases regularization, leading to a slight increase in certified accuracy ($+0.9\%$) at the cost of decreased standard accuracy ($-0.4\%$).

To assess the effect of the propagation method, we compare standard \tool which uses \ibp, yielding an easier optimization problem \citep{JovanovicBBV21}, to \crownibp, which generally yields tighter bounds. We observe that using the less precise \boxd propagation indeed yields better standard and certified accuracy.

\begin{wrapfigure}[16]{r}{0.51\textwidth}
	\centering
	\vspace{-5mm}
	\begin{minipage}{1.1\linewidth}
	\begin{subfigure}[t]{0.385\linewidth}
		\includegraphics[width=0.95\linewidth]{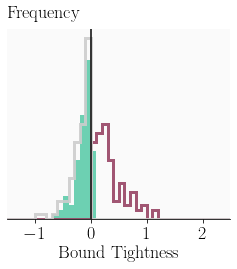}
		\vspace{-2mm}
		\caption{\ibp-trained}
	\end{subfigure}
	\begin{subfigure}[t]{0.55\linewidth}
		\includegraphics[width=0.95\linewidth]{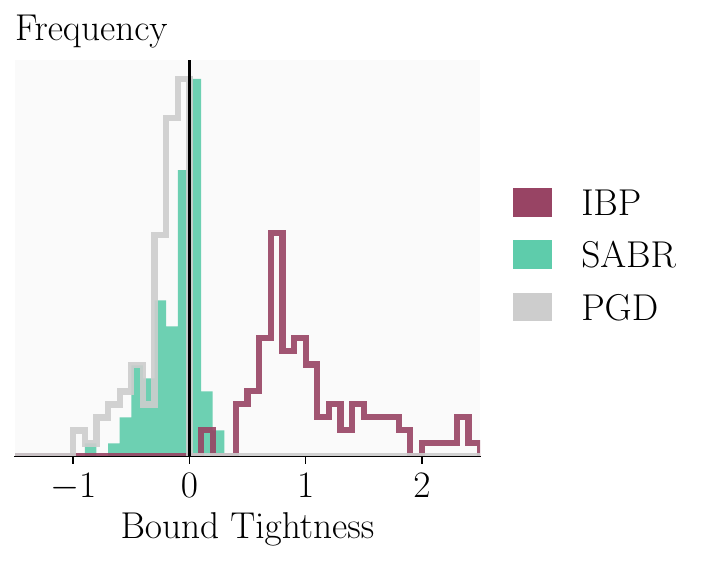}
		\vspace{-2mm}
		\caption{\tool-trained$\qquad\qquad\qquad\;$}
	\end{subfigure}
	\end{minipage}
	\vspace{-3mm}
	\caption{Tightness comparison of bounds computed with \ibp ($\lambda=1$), \tool ($\lambda=0.1$), and an adversarial attack (PGD), relative to the exact \milp \citep{TjengXT19} solution. A bound tightness greater than 0 indicates that the computed approximate bound is over-approximate (e.g. for certification), one below 0 is under-approximate (e.g. an adversarial attack).}
	\label{fig:bound_tightness}
\end{wrapfigure}
\subsection{Bound Tightness}
To support the intuitions discussed in \cref{sec:method}, we compare the tightness of \ibp, $\tool$, and PGD bounds on the worst-case margin loss. To make the computation of the exact worst-case loss ($y^\Delta_\text{\milp}$) via mixed integer linear programming (\milp) and using the encoding from \citet{TjengXT19} tractable, we train a small network (2 convolutional layers and 1 linear layer) with \ibp and \tool ($\lambda=0.1$) on \mnist at $\epsilon = 0.1$. 
We compute bounds on the minimum logit difference $y^\Delta := \min_i y_t - y_i$ for the first 100 test set samples using \tool as during training ($8$-step PGD attack targeting the Cross-Entropy loss), using PGD with a $50$-step margin attack and $3$ restarts per adversarial label, and using \ibp.

In \cref{fig:bound_tightness}, we show histograms of the bound tightness ($y^\Delta_\text{\milp} - y^\Delta_\text{approx}$), where results greater 0 correspond to the bound (PGD, \ibp, or \tool) being \emph{larger} than the actual worst case loss and results smaller 0 correspond to the bound being \emph{smaller}. A sound verification method will thus always yield positive bound tightness and is the more precise the smaller its absolute value. Similarly, adversarial attack based methods will always yield negative bounds, with stronger attacks generally yielding smaller magnitudes.

We observe that, especially for the \tool trained network, \ibp bounds are relatively loose. \tool bounds are generally tighter than both PGD and \ibp bounds (smaller mean magnitude), although clearly not sound (as discussed in \cref{sec:method}). We highlight that the PGD attack used to compute the bound is significantly stronger than the one used for \tool. We conclude that while \tool bounds often do not include the true worst case loss, they represent a better proxy than either \ibp or PGD bounds.

\begin{wraptable}[12]{r}{0.55\textwidth}
	\centering
	\vspace{-4mm}
	\begin{minipage}{1.00\linewidth}
		\resizebox{1.00\linewidth}{!}{
			\begin{threeparttable}
				\caption{Mean and maximum row-wise $\ell_1$-norm of effective weight matrices. Where applicable ($^\ast$), BN layers are merged with the preceding convolutional (Conv) and linear (Lin) layer. 			
					\vspace{-2mm}}
				\label{tab:growth_rates}
				\begin{tabular}{ld{2.2}d{2.2}d{2.2}d{2.2}d{2.2}d{3.2}}
					\toprule
					\multirow{2.5}{*}{Layer} & \multicolumn{2}{c}{PGD} & \multicolumn{2}{c}{\tool} & \multicolumn{2}{c}{IBP} \\
					\cmidrule(rl){2-3} \cmidrule(rl){4-5}				\cmidrule(rl){6-7}
					& \footnotesize mean & \footnotesize max & \footnotesize mean & \footnotesize max & \footnotesize mean & \footnotesize max \\
					\midrule
					Conv 1$^\ast$ & 4.56  & 12.50 &  0.73 &	 2.36 &	 0.64 &   2.74 \\
					Conv 2$^\ast$ & 11.88 & 21.44 &  3.21 & 12.93 &  2.54 &   9.00 \\
					Conv 3$^\ast$ & 11.29 & 18.33 &  4.19 &	17.18 &	17.81 &	 58.97 \\
					Conv 4$^\ast$ & 20.86 &	31.53 &  4.48 &	22.18 &	 6.60 &	 25.75 \\
					Conv 5$^\ast$ & 22.47 & 61.31 &  5.62 & 39.71 & 31.77 & 272.52 \\
					Lin 1$^\ast$  & 30.60 &	98.06 & 19.52 & 83.62 & 32.25 &  65.13 \\
					Lin 2         & 40.07 &	47.29 & 34.32 & 42.06 &	97.20 &	113.65 \\
					\bottomrule
				\end{tabular}
				\label{tab:row_l1_norm}
			\end{threeparttable}
		}
	\end{minipage}
\end{wraptable}
\subsection{Box Growth Rates of Trained Networks}\label{app:growth_rates}
In \cref{tab:row_l1_norm}, we compare the mean and max row-wise $\ell_1$-norm of the effective weight matrices of PGD, \tool and \ibp trained networks for \cifar and $\epsilon=2/255$, depending on the network layer. Where applicable, we combine batch normalization layers with the preceding affine layers. As we show in \cref{sec:theory}, this corresponds to the mean and maximum growth rate $\kappa$ for \boxd of equal side lengths.

We observe that growth rates generally increase with network depth. Interestingly training with \boxd, either in the form of \tool or \ibp significantly reduces growth rates in the early layers, but much less in later layers. For \ibp trained networks, the maximum growth rate in later layers can even exceed that of the PGD trained network.

\subsection{Training Algorithm}
\begin{algorithm}
	\caption{get\_propagation\_region}
	\label{alg:midp_uns_box}
	\begin{algorithmic} 
		\REQUIRE Neural network $\vh$, input $\vx$, label $t$, perturbation radius $\epsilon$, \ratiol $\lambda$, step size $\alpha$, step number $n$
		\ENSURE Center $\vx'$ and radius $\tau$ of propagation region $\bc{B}^{\tau}(\vx')$
		\STATE $(\underline{\vx}, \overline{\vx}) \gets \text{clamp}((\vx-\epsilon, \vx+\epsilon), 0, 1)$ \COMMENT{Get bounds of input region}
		\STATE $\boldsymbol{\tau} \gets \lambda / 2 \cdot (\overline{\vx} - \underline{\vx})$ \COMMENT{Compute propagation region size $\tau$}
		\STATE$\vx^*_0 \gets \text{Uniform}(\underline{\vx}, \overline{\vx})$  \COMMENT{Sample PGD initialization}
		\FOR{$i=0 \dots n-1$} \COMMENT{Do $n$ PGD steps}
		\STATE$\vx^*_{i+1} \gets \vx^*_i + \alpha \cdot \epsilon \cdot \sign(\nabla_{\vx^*_i} \bc{L}_\text{CE}(h(\vx^*_i), t))$
		\STATE$\vx^*_{i+1} \gets \text{clamp}(\vx^*_{i+1}, \underline{\vx}, \overline{\vx})$ 
		\ENDFOR
		\STATE$\vx' \gets \text{clamp}(\vx^*_n, \underline{\vx} + \tau,  \overline{\vx} - \tau)$ \COMMENT{Ensure that $\bc{B}^{\tau}(\vx')$ will lie fully in $\bc{B}^{\epsilon}(\vx)$}
		\RETURN $\vx'´, \tau$
	\end{algorithmic}
\end{algorithm}

\begin{algorithm}
	\caption{\tool Training Epoch}
	\label{alg:train_loop}
	\begin{algorithmic} 
	\REQUIRE Neural network $\vh_{\theta}$, training set $(\mX,\mT)$, perturbation radius $\epsilon$, \ratiol $\lambda$, learning rate $\eta$, $\ell_1$ regularization weight $\ell_1$
	\FOR{$(\vx, t)=(\vx_0, t_0) \dots (\vx_b, t_b)$} \COMMENT{Sample batches $\sim (\mX,\mT)$}
	\STATE$(\vx', \tau) \gets \textrm{get\_propagation\_region}$ \COMMENT{Refer to \cref{alg:midp_uns_box}}
	\STATE$\bc{B}^{\tau}(\vx') \gets \textsc{Box}(\vx', \tau)$ \COMMENT{Get box with midpoint $\vx'$ and radius $\tau$}
	\STATE$\vu_{y^\Delta} \gets \textrm{get\_upper\_bound}(\vh_{\theta}, \bc{B}^{\tau}(\vx'))$ \COMMENT{Get upper bound $\vu_{y^\Delta}$ on logit differences}
	\STATE \COMMENT{based on \ibp}
	\STATE$\textrm{loss} \gets \bc{L}_\text{CE}(\vu_{y^\Delta}, t)$ %
	\STATE$\textrm{loss}_{\ell_1} \gets \ell_1 \cdot \textrm{get\_}\ell_1\textrm{\_norm}(h_{\theta})$
	\STATE$\textrm{loss}_{tot} \gets \textrm{loss} +  \textrm{loss}_{\ell_1}$ 
	\STATE$\theta \gets \theta - \eta \cdot \nabla_{\theta}\textrm{loss}_{tot}$\COMMENT{Update model parameters $\theta$}
	\ENDFOR
	\end{algorithmic}
\end{algorithm}

\fi

\end{document}